\documentclass{article} 
\usepackage{iclr2027_conference,times}

\usepackage{amsmath,amsfonts,bm}

\def\eqref#1{equation~\ref{#1}}

\def\1{\bm{1}}

\def\mA{{\bm{A}}}

\def\mD{{\bm{D}}}

\def\mI{{\bm{I}}}

\def\mL{{\bm{L}}}

\def\mV{{\bm{V}}}
\def\mW{{\bm{W}}}
\def\mX{{\bm{X}}}
\def\mY{{\bm{Y}}}
\def\mZ{{\bm{Z}}}

\DeclareMathAlphabet{\mathsfit}{\encodingdefault}{\sfdefault}{m}{sl}
\SetMathAlphabet{\mathsfit}{bold}{\encodingdefault}{\sfdefault}{bx}{n}

\def\gC{{\mathcal{C}}}
\def\gD{{\mathcal{D}}}
\def\gE{{\mathcal{E}}}
\def\gF{{\mathcal{F}}}
\def\gG{{\mathcal{G}}}
\def\gH{{\mathcal{H}}}

\def\gL{{\mathcal{L}}}

\def\gO{{\mathcal{O}}}

\def\gS{{\mathcal{S}}}

\def\gV{{\mathcal{V}}}
\def\gW{{\mathcal{W}}}

\newcommand{\R}{\mathbb{R}}

\usepackage{hyperref}
\usepackage{url}
\usepackage{amsmath,amssymb,amsfonts,amsthm}
\usepackage{graphicx}

\usepackage{booktabs}
\usepackage{todonotes}
\usepackage{cleveref}
\usepackage{adjustbox}
\usepackage{multirow}
\usepackage{wrapfig}

\newcommand{\model}{ONDA}
\newcommand{\modelextunderlined}{\underline{\textbf{o}}scillatory \underline{\textbf{n}}eural \underline{\textbf{d}}ynamics on she\underline{\textbf{a}}ves}
\newcommand{\modelext}{Oscillatory Neural Dynamics on Sheaves}
\newcommand{\ie}{i.e., }

\definecolor{first}{HTML}{009E73}  
\definecolor{second}{HTML}{D55E00} 
\definecolor{third}{HTML}{000000}  
\newcommand{\one}[1]{\textcolor{first}{\boldmath\bf#1}}
\newcommand{\two}[1]{\textcolor{second}{\boldmath\bf#1}}
\newcommand{\three}[1]{\textcolor{third}{\boldmath\bf#1}}

\newtheorem{theorem}{Theorem}
\newtheorem{proposition}{Proposition}
\newtheorem{corollary}{Corollary}
\newtheorem{lemma}{Lemma}
\title{Oscillatory Neural Dynamics over Sheaves}

\author{%
  \textbf{Jan-Willem Van Looy}\textsuperscript{1,}\thanks{Equal contribution. Contacts: Jan-Willem Van Looy: janwillem.vanlooy@unibo.it, Alessandro Trenta: alessandro.trenta@phd.unipi.it, Alessio Gravina: alessio.gravina@di.unipi.it, Alessio Borgi: alessio.borgi@uniroma1.it, ab3352@cam.ac.uk.}
  \quad \textbf{Alessandro Trenta}\textsuperscript{2,$*$}
  \quad \textbf{Alessio Gravina}\textsuperscript{2,$*$}
  \quad \textbf{Alessio Borgi}\textsuperscript{3,4$*$} \\
  \textbf{Ferdinando Zanchetta}\textsuperscript{1}
  \quad \textbf{Pietro Liò}\textsuperscript{4}
  \quad \textbf{Davide Bacciu}\textsuperscript{2}
  \quad \textbf{Rita Fioresi}\textsuperscript{1} \\[6pt]
  \textsuperscript{1}FaBiT, University of Bologna \\
  \textsuperscript{2}Department of Computer Science, University of Pisa \\
  \textsuperscript{3}Department of Computer, Control and Management Engineering, Sapienza University of Rome \\
  \textsuperscript{4}Department of Computer Science and Technology, University of Cambridge
}

\iclrfinalcopy 
\begin{document}

\maketitle
\lhead{Preprint}

\maketitle

\begin{abstract}
Effective long-range propagation remains a central challenge in graph neural networks, as increasing a model's propagation depth does not guarantee that distant nodes effectively influence each other. Sheaf neural networks enrich graph propagation through matrix-valued transport between stalks; still, this expressivity alone does not automatically imply effective long-range communication. We introduce \textsc{ONDA}, a long-range graph learning framework based on \emph{operator-valued information waves}. Stalk-valued representations evolve through second-order dynamics governed by learned sheaf transport operators, combining wave-like propagation with expressive local geometry. We characterize long-range influence through a stalk-wise sensitivity analysis and show that the cross-influence never vanishes. Across long-range propagation, severe graph bottlenecks, graph transfer, and heterophilic benchmarks, \textsc{ONDA} consistently improves over scalar wave propagation, diffusive sheaf baselines, and state-of-the-art models, demonstrating the benefit of coupling wave dynamics with matrix-valued transport.
\end{abstract}
\section{Introduction}\label{sec:introductuion}
Effectively modeling long-range dependencies remains a central challenge in deep learning for graphs \citep{shi2023exposition, gravina_gnn_ssm}. Graph Neural Networks (GNNs) \citep{nn4g, gnn, chebnet, MPNN, gcn} usually rely on local message-passing to exchange information between nodes, meaning that dependencies between non-adjacent regions must be mediated through repeated propagation steps. However, while increasing depth expands the receptive field (\ie the region from which information can be gathered), it does not guarantee that distant nodes communicate effectively. Indeed, challenges such as over-smoothing \citep{cai2020note, oono2020graph, rusch2023survey}, over-squashing \citep{alon2021oversquashing, topping2022understanding, diGiovanniOversquashing, mishayev2025shortrange}, and, more generally, vanishing gradients \citep{gravina_gnn_ssm} prevent relevant information exchange between distant nodes. For this reason, \emph{learning how to propagate information effectively over long graph distances} has become a critical problem in graph representation learning. 

Existing approaches address this problem from several directions. Rewiring methods modify the graph to shorten communication paths \citep{topping2022understanding,drew}, while graph Transformers allow direct message passing between any pair of nodes via attention mechanisms \citep{graphtransformer, graphgps}. A complementary line of work instead changes the dynamics of propagation while retaining sparse local interactions. In particular, differential-equation-inspired GNNs (DE-GNNs) have explored stable and non-dissipative dynamics as a way to maintain information over depth \citep{rusch2022graph, gravina_adgn,gravina_phdgn,gravina_stable_chebnet}. Oscillatory DE-GNNs \citep{rusch2022graph, gravina_sonar} follow this direction by modeling information propagation through second-order wave dynamics on the original graph.

While these approaches primarily focus on improving \emph{how far} information can propagate, another important aspect is \emph{how} information is exchanged between neighboring nodes. Many widely used GNNs generally propagate features through local aggregation operators based on scalar edge weights and shared feature transformations. Cellular sheaves provide a richer framework by associating local vector spaces (or stalks) with nodes and edges, together with linear restriction maps that relate node representations within a common edge space \citep{curry2014sheaves, hansen2019toward, hansen2020sheaf}. Sheaf Neural Networks (SNNs) learn these maps, inducing matrix-valued couplings between neighboring node stalks and substantially enlarging the class of graph-supported propagation operators \citep{bodnar2022neural}. This additional expressivity, however, does not by itself guarantee effective long-range communication between nodes. Indeed, neural sheaf models still rely on repeated dissipative local propagation, while information originating from distant regions must therefore traverse multiple successive sheaf transformations before reaching a target node. Consequently, maintaining informative signals over depth remains a largely underexplored challenge in SNNs, complementary to the problem of learning richer local transport.
Recent works have started investigating how to extend neural sheaf architectures through mechanisms like polynomial filters \citep{borgi2025polynomial}, 
large-scale diffusion \citep{bamberger2025bundle} and directional routing \citep{ribeiro2026cooperative} to broaden the radius of information propagation across the graph.

Rather than extending the spatial range of individual updates or controlling which messages traverse the graph, in this work, we seek to design a principled sheaf neural network whose dynamics enable useful stalk-valued information to remain active as it traverses the graph. This motivates \model{} (\modelextunderlined{}), a framework for long-range information propagation based on \emph{operator-valued information waves}. In \model{}, stalk-valued representations evolve through an oscillatory second-order dynamical system governed by a learned sheaf Laplacian. The learned sheaf shapes the propagation of these information waves: its restriction maps determine how information is transformed between stalks and how different propagated contributions reinforce or cancel when they interact. In this sense, the learned sheaf acts as a propagation medium whose local properties govern how information waves travel across the graph. This couples the persistence of wave propagation with the expressive local geometry of cellular sheaves, allowing information to remain influential over distance while being non-trivially transformed along the graph.

Our contributions are threefold: \textbf{(i)} we introduce \model{}, a new class of sheaf neural networks based on second-order oscillatory dynamics, providing a non-trivial extension of wave-like graph propagation to matrix-valued sheaf operators; \textbf{(ii)} we theoretically investigate the long-range propagation properties of \model{}, showing that its oscillatory dynamics preserve the influence of distant information; \textbf{(iii)} we demonstrate strong and consistent empirical performance across a diverse suite of tasks, including long-range propagation, over-squashing mitigation, and heterophilic graph learning.

\section{Related Work}
\label{sec:related}
\textbf{Long-Range Propagation on Graphs.} Although stacking message-passing layers progressively enlarges the receptive field, information from distant nodes can rapidly lose influence due to phenomena such as over-smoothing \citep{cai2020note, oono2020graph, rusch2023survey} and over-squashing \citep{alon2021oversquashing, topping2022understanding, diGiovanniOversquashing, mishayev2025shortrange}, both closely related to vanishing gradients \citep{gravina_gnn_ssm}. 
Various approaches aim to improve long-range information propagation. Graph rewiring methods \citep{topping2022understanding, DIGL, drew, effective_resistance} modify the graph topology to create more favorable communication paths, whereas Graph Transformers \citep{san, graphgps, graphtransformer, ying2021transformers, grit} use attention to learn direct interactions between nodes. Other methods modify the propagation dynamics through weight-space regularization \citep{gravina_adgn, gravina_swan}, port-Hamiltonian dynamics \citep{gravina_phdgn}, message filtering \citep{amp, finkelshtein2024cooperative}, 
or virtual nodes \citep{southern2025understanding}. Oscillatory GNNs \citep{rusch2022graph,gravina_sonar} exploit second-order oscillatory propagation dynamics to facilitate communication between distant nodes, but operate within conventional GNN frameworks and therefore rely on standard graph-based interactions. In contrast, \model{} introduces oscillatory dynamics into sheaf neural networks, combining persistent wave-like propagation with learned matrix-valued transport operators that can control how information is transformed and transmitted along individual edges. Together, these mechanisms allow \model{} to combine effective long-range propagation with richer, geometry-aware information transport. 

\textbf{Sheaf Neural Networks.} Cellular sheaves are used to extend graph-based learning by associating vector spaces with graph cells and linear restriction maps with their incidences, giving rise to matrix-valued generalizations of classical graph operators \citep{hansen2020sheaf}. Building on this framework, \cite{bodnar2022neural} introduced learnable sheaf structures into GNNs, showing that the resulting diffusion can provide favourable properties in heterophilic settings and alleviate over-smoothing. Subsequent works have explored different ways of parametrizing and learning the underlying sheaf, including connection-Laplacian constructions \citep{barbero2022sheaf}, polynomial filters \citep{borgi2025polynomial}, 
inductive biases inspired by joint diffusion processes \citep{hernandez-caralt2024joint}, and adaptive nonlinear sheaf Laplacians \citep{zaghen2024sheaf}. More recently, \cite{bamberger2025bundle} proposed using continuous diffusion over flat vector bundles to operate over larger spatial scales, while 
other approaches consider directional routing \citep{ribeiro2026cooperative} specifically designed to better exploit deep sheaf propagation. Nevertheless, most existing SNNs remain centered around diffusion-like dynamics, which are inherently dissipative and struggle to capture long-range interactions. In contrast, \model{} introduces second-order oscillatory dynamics into sheaf neural networks, interpreting learned matrix-valued restriction maps as the wave propagation medium whose local properties govern how information waves travel across the graph.

\section{\model{}}\label{sec:model}
We introduce \modelext{} (\model{}), a new class of sheaf neural networks driven by second-order oscillatory dynamics, allowing stalk-valued information to remain relevant as it traverses the graph. A schematic illustration of \model{} is provided in \Cref{fig:onda}.

\begin{figure}
    \centering
    \includegraphics[width=0.8\linewidth]{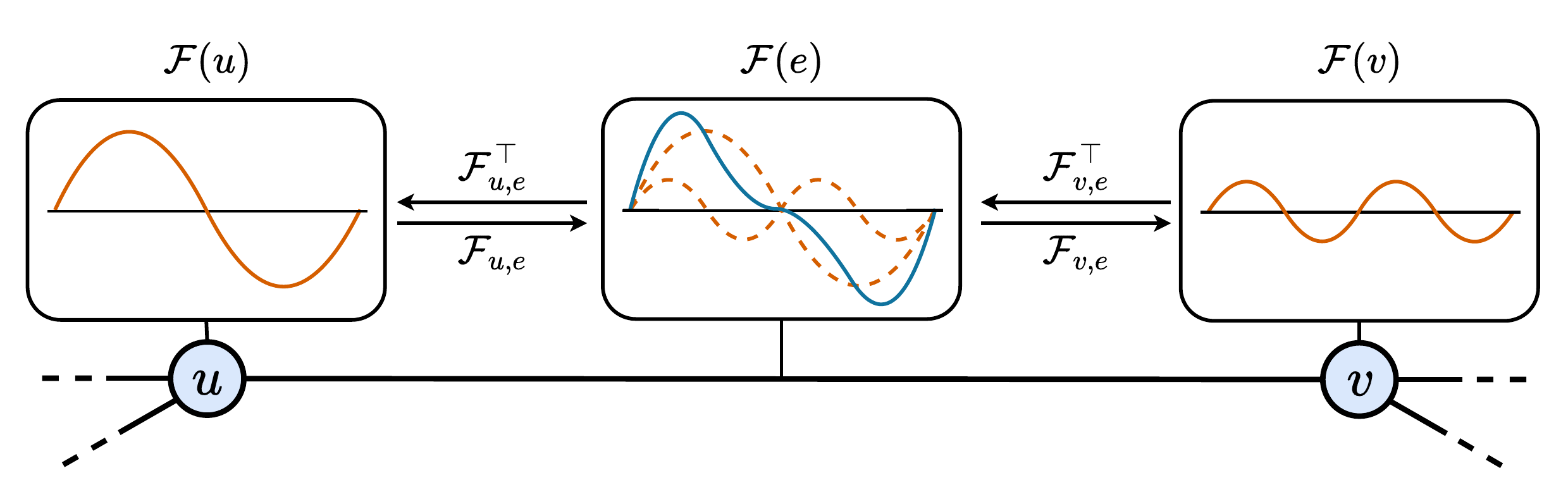}
    \caption{Schematic view of \model{} on a single edge. Orange curves show the oscillatory evolution of node states in the stalks \(\gF(u)\) and \(\gF(v)\); learned restriction maps express them in the shared edge stalk \(\gF(e)\) (dashed orange). The blue curve shows their superposition, highlighting constructive and destructive interference between the mapped signals.
    }
    \label{fig:onda}
\end{figure}
\subsection{Notation and preliminaries}
Let \(\gG  = (\gV , \gE )\) be an undirected graph, where \(\gV\) is the finite set of $n$ nodes (or vertices) and \(\gE \subseteq \mathcal{V} \times \mathcal{V}\) the set of edges. Given two nodes \(u,v \in \gV\) we denote an edge between them with the unordered pair \(\{u,v\}\). Graph connectivity is encoded by the adjacency matrix \(\mA\in\mathbb{R}^{n\times n}\), with degree matrix  \(\mD\in\mathbb{R}^{n\times n}\) defined by \(\mD_{ii} = \sum_{j=1}^n \mA_{ij}\). 
A \emph{cellular sheaf on \(\gG\)} \citep{shepard1985cellular,curry2014sheaves} assigns a finite-dimensional vector space \( \gF(v) \) to each vertex \( v \in \gV\), a finite-dimensional vector space \(\gF(e) \) to each edge \( e \in \gE\), and a linear restriction map
\( \gF_{ v, e}: \gF( v) \to \gF(e)\) to each incidence \( v \in e\). We refer to \( \gF(v) \) and \( \gF(e) \) as the \emph{vertex} and \emph{edge stalks}, respectively. In this paper, all stalks share a common dimension \(d\geq 1\). The space of \emph{0-cochains}, or \emph{stalk-valued node states} is \( C^0( \gG  ; \gF ) := \bigoplus_{v\in \gV } \gF(v).\) We write \(\mX=(\mX_v)_{v\in \gV }\in C^0(\gG ; \gF) \), where \( \mX_v \in \gF(v)\). For \( c \) feature channels, node features lie in \( C^0(\gG ; \gF )\otimes\mathbb{R}^{c}\). The \emph{Sheaf Laplacian} \citep{hansen2019toward, hansen2020laplacians} \( L_{ \gF }: C^0(\gG ; \gF ) \to C^0(\gG ; \gF )\) acts nodewise as:
\begin{equation}
\label{eq:sheaf-laplacian-nodewise}
(L_{\gF}\mX)_v = \sum_{\substack{e=\{v,u\}\in \gE }} \gF_{v,e}^{*} \bigl( \gF_{v,e} \mX_v - \gF_{u,e}\mX_u \bigr) \end{equation} 
where each stalk is equipped with an inner product and \( \gF_{v,e}^{*} \) denotes the adjoint of \( \gF_{v,e} \). The resulting sheaf Laplacian is self-adjoint and positive semidefinite \citep{hansen2019toward}. Along each edge, the restriction maps place neighbouring node representations in a common space, where their discrepancy is computed; the adjoint then maps this discrepancy back to the incident node, where it is aggregated with the remaining neighbour contributions. For \(d=1\), Euclidean inner products, and identity restriction maps, \(L_{\gF}\) reduces to the standard graph Laplacian \(\mL_\gG =  \mD - \mA\) \citep{brouwer2011spectra}. We also use the symmetrically normalized sheaf Laplacian \( \Delta_{\gF} := \gD_{\gF}^{ -1/2} L_{\gF} \gD_{\gF}^{-1/2},\) where \( \gD_{\gF} \) is the block diagonal part of \( L_{\gF} \) \citep{hansen2019toward}.

Given its oscillatory dynamics, \model{} is inherently a differential-equation-based architecture, where node representations are interpreted as the evolving states of a dynamical system defined over the graph \citep{GDE, gravina_adgn, han2024from}. Under this view, we denote the collection of all node states at time $t$ as \( \mX(t) \in C^0(\gG ;\gF) \otimes \mathbb R^c\), stacked as an \( nd \times c\) matrix. 
As in DE-GNNs, the continuous dynamics are instantiated as a neural architecture through numerical discretization \citep{gravina_adgn}. 
Therefore, given an initial condition \(\mX(0) = \overline{\mX}\) and a step size $h>0$, we denote by \( \mX^{ (\ell) }\) the node features after \(\ell\) integration steps, initialized with \( \mX^{(0)} = \overline{\mX} \). Thus, \(\mX^{(\ell)} \approx\mX(t_\ell)\), where \(t_\ell=h\ell\)
and the parenthesized superscript indexes the discrete steps.

\subsection{\modelext{}}\label{subsection:onda_modl_descr}
Neural sheaf models replace scalar graph interactions with matrix-valued transport between local stalks, transforming information as it crosses each edge. Yet, expressive transport alone does not guarantee that information remains influential over long propagation ranges. \model{} addresses both aspects jointly: the learned sheaf controls how information moves between neighbouring stalks, while second-order oscillatory dynamics govern how it evolves with depth.

Specifically, we model the evolution of the stalk-valued representations \( \mX(t) \in C^0(\gG ;\gF ) \otimes \mathbb R^c \) through the following second-order differential equation
\begin{equation}\label{eq:onda_sheaf_wave}
    \ddot{\mX}(t) = -L_{\gF}(\mI_n\otimes\mW_1)\mX(t)\mW_2 -R_\theta(\mX(t))\odot\dot{\mX}(t)+F_\theta(\mX(t)), \quad {\mX}(0) = \overline{\mX}
\end{equation}
where \( L_{\gF} \) is the sheaf Laplacian from \Cref{eq:sheaf-laplacian-nodewise}, \(\mI_n \in \mathbb{R}^{n\times n}\) is the identity matrix, \(\mW_1\in\mathbb{R}^{d\times d}\) is the weight matrix acting on stalk coordinates, and \(\mW_2\in\mathbb{R}^{c\times c}\) transforms feature channels. The terms \(R_\theta(\mX(t))\in\mathbb R_{\geq0}^{nd\times c}\) and \(F_\theta(\mX(t))\in \mathbb R^{nd\times c}\) provide state-dependent dissipation and external forcing, respectively, while $\odot$ denotes the  Hadamard product. 

In \Cref{eq:onda_sheaf_wave}, the sheaf operator shapes the information waves: restriction maps determine how signals transform across edges and how incoming wave contributions combine within each stalk. This mirrors classical wave equations \citep{Evans2010-PDEs}, where the spatial Laplacian operator encoded the propagation properties of the underlying medium. \model{} therefore couples the propagation bias of wave dynamics with learned matrix-valued transport.

With zero dissipation and forcing and identity weight matrices, \Cref{eq:onda_sheaf_wave} reduces to a cellular-sheaf generalization of the graph wave equation \( \ddot{\mX}(t) = -\mL_{\gG} \mX(t)\). Here, \( \mL_{\gG} \) is the graph Laplacian which governs spatial coupling in the graph wave equation and induces conservative oscillatory dynamics.

Purely conservative wave dynamics can nevertheless be restrictive for downstream learning, because not all information should necessarily be preserved indefinitely. We therefore augment the oscillatory term with the state-dependent dissipation \(R_\theta(\mX(t))\) and external forcing \(F_\theta(\mX(t))\). Dissipation adaptively attenuates propagated information, while forcing supplies an additional learned drive. Together, they let \model{} balance conservative and non-conservative behavior and adapt its propagation dynamics to the task at hand.

\paragraph{Discretization and Implementation Details.}\label{par:discr_impl_detail} We obtain a neural architecture from \Cref{eq:onda_sheaf_wave} by numerically discretizing its continuous dynamics. Introducing the velocity \(\mV(t)=\dot{\mX}(t)\) and applying a finite-difference scheme \citep{Ferziger2001-PDENumerical} yields the update 
\begin{equation}
\begin{aligned}
\mV^{( \ell+1)} & = \mV^{(\ell)}-hL_{\gF} \bigl( ( \mI_n \otimes \mW_1) \mX^{(\ell)} \mW_2\bigr)+hF_\theta(\mX^{(\ell)}) -h R_\theta( \mX^{(\ell)})\odot\mV^{(\ell)}, \\
\mX^{(\ell+1)} & = \mX^{(\ell)}+h\mV^{(\ell+1)}
\end{aligned}
\label{eq:onda_discrete_updates}
\end{equation}
where \(h>0\) is the integration step size, and \( \ell=0, \ldots,\gL-1\).

To increase expressivity, we stack multiple instances of the propagation dynamics in \Cref{eq:onda_discrete_updates} and interleave them with nonlinear feature transformations, following the general principle of separating recurrent propagation from channel-wise nonlinear mixing
\citep{gu2021efficiently,orvieto2023resurrecting,mpssm,gravina_sonar}. A \emph{block} consists of \(\gL\) propagation steps from \Cref{eq:onda_discrete_updates}, followed by an MLP transformation. In block \(b\), the features after \(\gL \) propagation steps initialize block \( (b+1) \) through:
\begin{equation}
\mX^{(b+1,0)} = \operatorname{MLP}_b(\mX^{(b, \gL)}),
\qquad b=0, \ldots, B-1
\label{eq:onda_block_composition}
\end{equation}
The velocity of the new block is again initialized from its input
representation and the final block output is passed to a task-specific decoder. Propagation within each block can be either fixed or adaptive. In the fixed setting, restriction maps are inferred once from the block input and held constant for all \( \gL \) steps: each block first learns the sheaf governing transport, then evolves node representations according to \Cref{eq:onda_discrete_updates}.
In the adaptive setting, restriction maps are recomputed at each step from the current features, allowing the evolving node states to reshape the propagation medium. 
The number of blocks \( B \), the number of propagation steps \( \gL \), and the integration step size \( h \) are treated as architectural hyperparameters.

Following standard SNN practice \citep{bodnar2022neural,bamberger2025bundle,ribeiro2026cooperative}, we parametrize restriction maps with neural networks. We consider both symmetrically normalized sheaf Laplacians and a directed variant. In the latter, restriction maps are learned independently for each edge orientation, and each discrepancy is applied only at the receiving endpoint. For simplicity, we retain the notation \(L_{\gF}\). Further details on sheaf normalization and the directed variant are provided in Appendix~\ref{subsec:app_sheaf_lap}; the restriction-map parametrizations are described in Appendix~\ref{app:restriction_parameterizations}.

\section{Theoretical Analysis}\label{sec:analysis}
In this section, we study the evolution of node states in a single \model{} block, with particular focus on providing theoretical grounding for long-range influence across nodes. We start from the continuous definition of the model in \Cref{eq:onda_sheaf_wave} without dissipation and external forcing, showing that the oscillatory dynamics provide explicit guarantees on the cross-influence between nodes. In Theorem \ref{prop:discrete_source_target_response}, we translate these results to the discrete case. 

To isolate the benefits of the oscillatory dynamics, we fix the restriction maps within one block and set the weights to the identity. For simplicity, we also work with a single feature channel and write \( \gH:= C^0 ( \gG; \gF ) = \bigoplus_{v \in \gV} \gF(v) \cong \R^{n d} \) for the feature space. With a fixed sheaf Laplacian \( L_\gF : \gH\rightarrow \gH\), the conservative form of \model{} is given by:
\begin{equation}
\ddot{\mX}(t) + L_\gF \mX(t) = 0, \qquad \mX(0) = \overline{\mX}, \qquad \dot{\mX}(0) = \overline{\mV}
\label{eq:fixed_operator_wave}
\end{equation}
where $\overline{\mX}$ and $\overline{\mV}$ are respectively the initial node features and their velocities. We now define the equivalent of the sine and cosine functions for the sheaf Laplacian through the corresponding power series expansion \citep{higham2008functions}:
\begin{equation}
    \gC_{\gF}(t) = \sum_{k=0}^{\infty} (-1)^k \frac{t^{2k}}{(2k)!} L_\gF^k, \qquad   \gS_{\gF}(t) = \sum_{k=0}^{\infty} (-1)^k \frac{t^{2k+1}}{(2k+1)!} L_\gF^k
\end{equation} 
Analogously to the wave equation in physics \citep{Evans2010-PDEs}, the general solution to \Cref{eq:fixed_operator_wave} is given by \(\mX(t) = \gC_{\gF}(t)\overline{\mX} + \gS_{\gF}(t) \overline{\mV} \), as shown in Appendix~\ref{subsec:app_sol_wave_eq}. This describes the node features at time \(t \) through a combination of the sine and cosine operators.

We now seek to measure how different nodes influence each other through the common framework of sensitivity analysis \citep{diGiovanniOversquashing}
. Concretely, we measure how changes in the features of node $u$ affect a target node $v$. For each \( u \in\gV \), we define \( \iota_u: \gF(u) \to \gH\) and \(\pi_u:  \gH \to \gF(u)\) as the inclusion and projection operators, respectively. We define $\mW\in \gF(u)$ as a perturbation of the initial state at node $u$. Thus, the perturbed initial condition of the whole graph is $\overline{\mX} + \iota_u \mW$, where the components relative to $u$ are shifted by a vector \( \mW \). Given the closed-form solution of \Cref{eq:fixed_operator_wave}, we can calculate the difference between the perturbed and unperturbed trajectories as:
\begin{equation}\label{eq:trajectory_change_perturbation}
    \Delta\mX(t) = (\gC_{\gF}(t)(\overline{\mX} + \iota_u \mW) + \gS_{\gF}(t)\overline{\mV}) - (\gC_{\gF}(t)\overline{\mX} + \gS_{\gF}(t)\overline{\mV}) = \gC_{\gF}(t)\iota_u\mW
\end{equation}

This equation already shows that changes in the stalk of a single node $u$ affect the whole trajectory indefinitely, as $\gC_{\gF}(t)$ does not vanish in the limit of $t \rightarrow \infty $. We refer to the evolving response \( \Delta\mX(t) \) generated by a perturbation at node \(u\) as its \emph{information wave}.  Projecting this wave onto a target node $v$ gives $\Delta \mX_v(t) = \pi_v\gC_{\gF}(t)\iota_u \mW$. Independent of the particular perturbation \(\mW\), we therefore define the \emph{source-to-target influence} at time \( t \) as:
\begin{equation}
J^\gF_{ v \leftarrow u}(t) := \pi_v \gC_{\gF}(t) \iota_u
= \bigl( \gC_{\gF}(t) \bigr)_{vu} : \gF(u)\rightarrow\gF(v)
\label{eq:stalk_sensitivity}
\end{equation}
In contrast to the scalar case, the map \( J^\gF_{v\leftarrow u}(t)\) is a matrix-valued transformation between the source and target stalks, describing both the strength of the interaction and how directions in \(\gF(u)\) are transformed in \(\gF(v) \). The fact that the operators $\gC_{ \gF }, \iota_u, \pi_v$ are all linear allows for the linear superposition of waves and their effects, similar to how physical waves linearly compose and decompose \citep{Evans2010-PDEs}.
In the small perturbation limit, the source-to-target response in \Cref{eq:stalk_sensitivity} can be thought of as the equivalent of the Jacobian matrix in sensitivity analysis on standard GNNs \citep{diGiovanniOversquashing} to the sheaf case.

To understand whether the non-vanishing influence of node \( u \) on the whole trajectory enables long-range propagation, we first examine the behaviour of \Cref{eq:stalk_sensitivity} depending on the target node $v$ in terms of the graph structure. Since \( (L_{\gF})_{vu} = 0\) whenever distinct nodes \(u,v\) are not adjacent, applying \( L_{\gF} \) once only couples the stalk of node $u$ to itself or adjacent ones. Subsequent powers \(L_{\gF}^k\) in the definition of the cosine operator $\gC_{\gF}$ can couple only nodes connected by a path of length at most \(k\). For \( k \geq 0\), we formalise this idea (see also Appendix \ref{subsec:proof_pathwise_source_target_response}) by defining \(\gW_k(u,v)\) as the set of length-\(k\) walks \( \omega=(v_0 = u, v_1, \ldots, v_k = v)\), where each step either follows an edge or remains at the
current node. From this we can construct an operator $ (L_{\gF})_{\omega}$ as the ordered composition of Laplacian blocks along $\omega$, that is, \( (L_{\gF})_\omega := (L_{\gF})_{v_kv_{k-1}} \ldots (L_{\gF})_{v_1v_0}: \gF(u) \rightarrow \gF(v).\)

 For \( k = 0 \), we define the empty product as the identity on \(\gF(u)\). In particular, this operator describes the stalk-to-stalk transformation accumulated along the walk \(\omega\) and lets us decompose the source-to-target response across all different walks that connect $u$ to $v$. 
\begin{proposition}[Pathwise source-to-target response] \label{prop:pathwise_source_target_response} Let \( u, v \in \gV \) be two nodes connected by a shortest path of length \(r\). 
Then:
\begin{equation}
J^\gF_{ v \leftarrow u}(t) = \sum_{ k=r }^{\infty} (-1)^k\frac{t^{2k}}{(2k)!} \sum_{\omega\in\gW_k(u,v)}(L_{\gF})_\omega \label{eq:pathwise_source_target_response}
\end{equation}
If there is no path connecting \(u\) and \(v\) in the graph, then \( J^\gF_{v\leftarrow u}(t)=0\) for every \(t\).
\end{proposition}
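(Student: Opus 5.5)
The plan is to expand the cosine series and read off the $(v,u)$ block. By definition, $J^\gF_{v\leftarrow u}(t)=\pi_v\gC_\gF(t)\iota_u=\sum_{k\ge0}(-1)^k\frac{t^{2k}}{(2k)!}\,\pi_v L_\gF^k\iota_u$. Interchanging the projection with the infinite sum is justified because $\gH\cong\R^{nd}$ is finite dimensional. The series for $\gC_\gF(t)$ converges absolutely in operator norm, being dominated by $\cosh(t\|L_\gF\|^{1/2})$-type bounds, for instance $\sum_k t^{2k}\|L_\gF\|^k/(2k)!$. The linear maps $\pi_v$ and $\iota_u$ are continuous. Hence it suffices to understand the block $(L_\gF^k)_{vu}=\pi_v L_\gF^k\iota_u$ for each $k$.

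The key step is a block-matrix expansion. Writing $L_\gF$ in $d\times d$ blocks $(L_\gF)_{ab}=\pi_aL_\gF\iota_b$ and using $\sum_{w\in\gV}\iota_w\pi_w=\mathrm{id}_\gH$, induction on $k$ gives
\[
(L_\gF^k)_{vu}=\sum_{v_1,\ldots,v_{k-1}\in\gV}(L_\gF)_{v\,v_{k-1}}\cdots(L_\gF)_{v_1u},
\]
with the empty product equal to $\mathrm{id}_{\gF(u)}$ when $k=0$. By \Cref{eq:sheaf-laplacian-nodewise}, $(L_\gF)_{ab}=0$ unless $a=b$ or $\{a,b\}\in\gE$. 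The off-diagonal blocks are $-\gF_{a,e}^*\gF_{b,e}$ and the diagonal blocks are $\sum_{e\ni a}\gF_{a,e}^*\gF_{a,e}$. Therefore every summand whose index sequence $(u,v_1,\ldots,v_{k-1},v)$ is not a walk in the sense of $\gW_k(u,v)$ (steps along edges or staying put) vanishes. This leaves exactly $(L_\gF^k)_{vu}=\sum_{\omega\in\gW_k(u,v)}(L_\gF)_\omega$.

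It remains to truncate. Each step of a lazy walk changes graph distance by at most one. So if $\mathrm{dist}(u,v)=r$, then $\gW_k(u,v)=\emptyset$ for $k<r$, and those terms drop out, giving \Cref{eq:pathwise_source_target_response}. If $u$ and $v$ lie in different connected components, no lazy walk can leave the component of $u$. Then $\gW_k(u,v)=\emptyset$ for all $k$, every block $(L_\gF^k)_{vu}$ is zero, and $J^\gF_{v\leftarrow u}(t)=0$ for all $t$.

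The only mildly delicate point is the bookkeeping between the induction on matrix powers and the walk definition. One must check that self-loop steps correspond to the diagonal blocks and that no other nonzero blocks exist, which follows directly from the nodewise formula. A further subtlety is that in the directed or normalized variants the sparsity pattern is unchanged, since $\gD_\gF$ is block diagonal, so the same argument applies verbatim. The convergence and interchange step is routine.
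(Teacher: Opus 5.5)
Your proposal is correct and follows essentially the same route as the paper: take the $(v,u)$-block of the cosine series, establish the walk expansion $(L_{\gF}^k)_{vu}=\sum_{\omega\in\gW_k(u,v)}(L_{\gF})_\omega$ by induction using the block sparsity of $L_{\gF}$, and drop the terms with $k<r$ (and all terms in the disconnected case) because the corresponding walk sets are empty. One small addition of yours is the explicit justification for moving $\pi_v$ and $\iota_u$ inside the absolutely convergent series, a step the paper leaves implicit.
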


The proof is provided in Appendix~\ref{subsec:proof_pathwise_source_target_response}. Thus, if \( u \) and \( v \) are \( r \) edges apart, the first possible contribution comes from shortest paths of length \(r\), with longer walks contributing through higher powers of \( L_{\gF } \). A key observation is that the propagation time \( t \) weighs the contribution of different walks through the coefficients \( \frac{( -1 )^k t^{2k}} {(2k)!} \). Hence, the contribution of walks of different lengths to the source-to-target response depends on the propagation time \( t \), and graph distance does not prevent communication between connected nodes. At early times, shorter walks dominate the response, while increasing \( t \) allows contributions from progressively longer walks to become relevant. Intuitively, the longer the wave evolves, the more opportunities it has to propagate through the graph and influence the target through different routes.

While Proposition \ref{prop:pathwise_source_target_response} describes how the source-to-target response is assembled from local interactions, we now describe its evolution as time varies. For the self-adjoint sheaf Laplacian, this temporal behaviour can be characterised through its spectral theory \citep{hansen2019toward, hansen2020laplacians}. By the finite-dimensional spectral theorem \citep{Halmos1974}, we can decompose \( L_{\gF} = \sum_\lambda \lambda P_\lambda\), where the sum runs over distinct eigenvalues \( \lambda \geq 0\) and \( P_\lambda \) is the orthogonal projector onto the corresponding eigenspace. As before, we write \( (P_\lambda)_{ vu} :=  \pi_v P_\lambda \iota_u : \gF(u) \rightarrow \gF(v)\) for the associated source-to-target block.

\begin{proposition}[Spectral source-to-target response] \label{prop:spectral_stalk_sensitivity} For the sheaf Laplacian \(L_{\gF}\), the source-to-target sensitivity satisfies:
\begin{equation}\label{eq:spectral_stalk_sensitivity}
    J^{\gF}_{v \leftarrow u}(t) =  (P_0)_{vu} + \sum_{\lambda > 0} \cos(t \sqrt{\lambda})(P_\lambda)_{vu}
\end{equation}
\end{proposition}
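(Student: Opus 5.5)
We will use the spectral decomposition \(L_{\gF}=\sum_\lambda \lambda P_\lambda\) provided by the finite-dimensional spectral theorem. The projectors \(P_\lambda\) are orthogonal, mutually annihilating (\(P_\lambda P_\mu=\delta_{\lambda\mu}P_\lambda\)), and sum to the identity, \(\sum_\lambda P_\lambda=\mI\). Since \(L_{\gF}\) is self-adjoint and positive semidefinite, every eigenvalue satisfies \(\lambda\ge 0\). The first step is to compute powers: by induction, \(L_{\gF}^k=\sum_\lambda \lambda^k P_\lambda\) for \(k\ge 1\). For \(k=0\) we have \(L_{\gF}^0=\mI=\sum_\lambda P_\lambda\), using the convention \(0^0=1\); this makes the formula uniform, so that \(P_0\) appears with coefficient \(1\) at \(k=0\) and coefficient \(0\) for all \(k\ge1\).

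Next, substitute this into the power series defining \(\gC_{\gF}(t)\) and exchange the sums. The sum over \(\lambda\) is finite, and each scalar series \(\sum_k (-1)^k t^{2k}\lambda^k/(2k)!\) converges absolutely for every \(t\). Hence the rearrangement is justified, and absolute convergence of \(\sum_k \frac{t^{2k}}{(2k)!}\|L_{\gF}\|^k\) in operator norm also confirms that \(\gC_{\gF}(t)\) is well defined. This yields
\[
\gC_{\gF}(t)=\sum_\lambda\Bigl(\sum_{k=0}^\infty (-1)^k\frac{(t\sqrt{\lambda})^{2k}}{(2k)!}\Bigr)P_\lambda=\sum_\lambda \cos(t\sqrt{\lambda})\,P_\lambda ,
\]
where \(\lambda\ge0\) guarantees that \(\sqrt\lambda\) is real and \(\lambda^k=(\sqrt\lambda)^{2k}\). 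For \(\lambda=0\) the inner series reduces to its \(k=0\) term, namely \(1=\cos 0\), so the kernel projector contributes \(P_0\) with no time dependence. If \(0\) is not an eigenvalue, we set \(P_0=0\).

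Finally, we compress by \(\pi_v(\cdot)\iota_u\), which is linear, to obtain \(J^{\gF}_{v\leftarrow u}(t)=(P_0)_{vu}+\sum_{\lambda>0}\cos(t\sqrt\lambda)(P_\lambda)_{vu}\). The only delicate step is the bookkeeping at \(\lambda=0\), where the convention \(0^0=1\) must be handled through \(L_{\gF}^0=\mI\). The interchange of the summations is routine, because the spectrum is finite.
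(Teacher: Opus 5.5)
Your proof is correct and is essentially the paper's argument. The paper computes $\gC_{\gF}(t)P_\lambda=\cos(t\sqrt{\lambda})P_\lambda$ termwise from $L_{\gF}^kP_\lambda=\lambda^kP_\lambda$, then sums over $\lambda$ using $\sum_\lambda P_\lambda=I$ and compresses with $\pi_v(\cdot)\iota_u$; this is the same spectral substitution you perform, and it sidesteps your $0^0$ bookkeeping.
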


The proof is provided in Appendix~\ref{app:proof_prop2}. This statement makes the oscillatory nature of the response explicit. Each block \((P_\lambda)_{vu}\) gives the contribution of the corresponding wave frequency and eigenspace, while the factor \( \cos( t \sqrt{\lambda} )\) determines how that contribution changes over time. The term \( (P_0)_{vu} \) remains constant, whereas the positive-eigenvalue contributions oscillate. Waves at different frequencies can reinforce or cancel, so a non-trivial source-to-target response can be weak or strong depending on time, but cannot decay to zero. This fundamentally distinguishes \model{} from sheaf models based on heat-flow dynamics, where positive-eigenvalue components decay exponentially \citep{bodnar2022neural, bamberger2025bundle}.

The continuous-time analysis shows that injecting oscillatory biases into feature dynamics sustains long-range influence on sheaves. We now explain how the above results directly translate into the discrete updates from \Cref{eq:onda_discrete_updates}, under the same assumptions of no dissipation or external forcing. Starting from the same perturbation \(\mW \in \gF(u)\) at the source node \( u \), let: \begin{equation}
    \Delta \mX^{(\ell)} := \mX^{(\ell)}(\overline{\mX} + \iota_u\mW) - \mX^{(\ell)}(\overline{\mX}) 
\end{equation} denote the resulting change in the node features after \( \ell \) discrete updates. Its component at a target node \( v \) is \( \Delta\mX_{v, \ell} := \pi_v \Delta \mX^{(\ell)} \in \gF(v). \) As in the continuous case, we define the \emph{discrete source-to-target sensitivity} \( J^{ \gF}_{ v \leftarrow u}[\ell,h] \) by the linear map satisfying \( \Delta \mX_{v,\ell} = J^{\gF}_{ v\leftarrow u } [\ell,h] \mW.\) Thus, \( J^{ \gF}_{v\leftarrow u}[\ell, h] \) describes how a perturbation in the source stalk is transformed into a response at the target after \( \ell \) discrete updates with step size \( h \). The following result gives this discrete source-to-target sensitivity a closed-form expression in terms of the spectral projectors, under a mild stability condition. In particular, it applies to the symmetrically normalized sheaf Laplacian \( \Delta_{\gF} \) for \(0 < h < \sqrt{2}\), since its spectrum is contained in \([0,2]\), as shown in Appendix~\ref{subsec:app_sheaf_lap}.

\begin{theorem}[Discrete source-to-target response]
\label{prop:discrete_source_target_response} Suppose \( L_{\gF} \) is self-adjoint and positive semidefinite and \( h^2 \lambda_{\max} (L_{\gF}) < 4. \) Then, for every integer \( \ell \geq 0 \):
\begin{equation}
    J^{\gF}_{v \leftarrow u }[\ell,h]  = \sum_{\lambda}
    \frac{ \cos \bigl( ( \ell + \tfrac12)\theta_h(\lambda) \bigr) }{ \cos \bigl ( \theta_h( \lambda)/2\bigr) }
    ( P_\lambda )_{vu}
\label{eq:discrete_wave_spectral_sensitivity}
\end{equation}
where \( \theta_h (\lambda) = 2\arcsin \! \left( \frac{h \sqrt{\lambda}}{2} \right). \) Moreover, \( J^{\gF}_{v \leftarrow u}[\ell ,h] = 0 \) whenever there is no path from \(u\) to \(v\) of length at most \(\ell\). Finally, for each node pair $u,v$, the response is bounded and, if it is non-zero at some step, it does not converge to zero as \(\ell \to \infty\).
\end{theorem}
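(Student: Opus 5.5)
The plan is to reduce the discrete recursion to a scalar three-term recurrence on each eigenspace of \(L_{\gF}\) and solve it in closed form. With \(R_\theta=0\), \(F_\theta=0\) and identity weights, \Cref{eq:onda_discrete_updates} is linear in \((\mX,\mV)\), so \(\Delta\mX^{(\ell)}\) obeys the same updates with initial data \(\Delta\mX^{(0)}=\iota_u\mW\). I will take \(\Delta\mV^{(0)}=0\), meaning the perturbation acts on the state only. Eliminating the velocity via \(h\mV^{(\ell+1)}=\mX^{(\ell+1)}-\mX^{(\ell)}\) gives the leapfrog recurrence
\[
\Delta\mX^{(\ell+1)} = (2\mI - h^2L_{\gF})\Delta\mX^{(\ell)} - \Delta\mX^{(\ell-1)}, \qquad \Delta\mX^{(1)} = (\mI - h^2 L_{\gF})\Delta\mX^{(0)} .
\]
Every iterate is therefore \(p_\ell(L_{\gF})\iota_u\mW\) for a polynomial \(p_\ell\) of degree \(\ell\). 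Applying \(\pi_v\) and using that \(\iota_u,\pi_v\) commute with the spectral calculus gives \(J^{\gF}_{v\leftarrow u}[\ell,h]=\sum_\lambda p_\ell(\lambda)(P_\lambda)_{vu}\).

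The next step is to identify \(p_\ell(\lambda)\). Because \(h^2\lambda<4\), the angle \(\theta=\theta_h(\lambda)\in[0,\pi)\) is well defined and satisfies \(2-h^2\lambda = 2-4\sin^2(\theta/2) = 2\cos\theta\). The scalar recurrence \(x_{\ell+1}=2\cos\theta\,x_\ell - x_{\ell-1}\) has solutions that are combinations of \(\cos(\ell\theta)\) and \(\sin(\ell\theta)\), and the same family contains \(\cos((\ell+\tfrac12)\theta)/\cos(\theta/2)\). I will then check the two initial values. At \(\ell=0\) this expression equals \(1\). At \(\ell=1\), the triple-angle formula gives \(\cos(3\theta/2)/\cos(\theta/2)=4\cos^2(\theta/2)-3=2\cos\theta-1=1-h^2\lambda\), which matches \(x_1\). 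For \(\lambda=0\) both sides are identically \(1\). This proves \Cref{eq:discrete_wave_spectral_sensitivity}.

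For locality, I will argue as in Proposition~\ref{prop:pathwise_source_target_response}. Since \(p_\ell\) has degree \(\ell\), the block \((L_{\gF}^k)_{vu}\) vanishes unless some walk of length \(k\) joins \(u\) to \(v\). So \(\pi_v p_\ell(L_{\gF})\iota_u=0\) whenever no path of length at most \(\ell\) exists. Boundedness follows because \(\theta<\pi\) gives \(\cos(\theta_h(\lambda)/2)\ge\cos(\theta_{\max}/2)>0\) uniformly. Hence \(\|J[\ell,h]\|\le\sum_\lambda\|(P_\lambda)_{vu}\|/\cos(\theta_{\max}/2)\).

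The main obstacle is non-convergence to zero. Write \(f(\ell)=\sum_\lambda \cos((\ell+\tfrac12)\theta_\lambda)M_\lambda\) with \(M_\lambda=(P_\lambda)_{vu}/\cos(\theta_\lambda/2)\). Distinct eigenvalues give distinct \(\theta_\lambda\in[0,\pi)\). If \(f(\ell_0)\neq0\) for some \(\ell_0\), then some \(M_\mu\neq0\). I plan to use Cesàro orthogonality: \(\frac1N\sum_{\ell<N} f(\ell)\cos((\ell+\tfrac12)\theta_\mu)\) converges to \(\tfrac12 M_\mu\) if \(\theta_\mu>0\), and to \(M_\mu\) if \(\theta_\mu=0\). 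This holds because the products reduce to cosines at frequencies \(\theta_\lambda\pm\theta_\mu\), which lie in \((-\pi,2\pi)\) and are nonzero modulo \(2\pi\) except in the diagonal case, so their averages vanish by geometric-sum bounds. If \(f(\ell)\to0\), these bounded-weight averages would tend to \(0\), contradicting \(M_\mu\neq0\). The delicate points are the bookkeeping for \(\theta_\mu=0\) and confirming that \(\theta_\lambda+\theta_\mu<2\pi\), which is exactly where the strict inequality \(h^2\lambda_{\max}<4\) enters.
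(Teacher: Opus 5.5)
Your proposal is correct. Up to the last step it follows the paper's proof:
\begin{itemize}
\item the same leapfrog recurrence $\Delta\mX^{(\ell+1)}=(2I_{\gH}-h^2L_{\gF})\Delta\mX^{(\ell)}-\Delta\mX^{(\ell-1)}$;
\item the same identification of $p_{\ell,h}(\lambda)$ by checking the recurrence and two initial values (your triple-angle check plays the role of the paper's sum-to-product step);
\item locality via the walk expansion of $(L_{\gF}^k)_{vu}$;
\item the same uniform bound through $\cos(\theta_h(\lambda)/2)=\sqrt{1-h^2\lambda/4}>0$.
\end{itemize}

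The genuine difference is the non-convergence argument.

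The paper argues quadratically. It expands $\|J_\ell\|_{\mathrm{HS}}^2$, averages the diagonal and cross terms separately, and obtains the Parseval-type limit $\|(P_0)_{vu}\|_{\mathrm{HS}}^2+\sum_{\lambda>0}\|(P_\lambda)_{vu}\|_{\mathrm{HS}}^2/(2\cos^2(\theta_h(\lambda)/2))>0$. It then invokes equivalence of norms to rule out $J_\ell\to 0$.

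You argue linearly. Averaging $J_\ell$ against the single test sequence $\cos((\ell+\tfrac12)\theta_\mu)$ isolates one coefficient $M_\mu$, with weight $\tfrac12$ or $1$. A null sequence has null weighted Ces\`aro means against bounded weights, which gives the contradiction.

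What each route buys:
\begin{itemize}
\item Your route avoids the Hilbert--Schmidt inner product, the cross-term bookkeeping and the norm-equivalence step, and it works in any norm. It also shows that each spectral block $(P_\mu)_{vu}$ can be recovered from the trajectory alone. As a by-product it gives the per-mode lower bound $\limsup_\ell\|J_\ell\|\ge\tfrac12\|M_\mu\|\ge\tfrac12\|(P_\mu)_{vu}\|$.
\item The paper's route instead gives one explicit time-averaged energy, which aggregates all modes and quantifies the overall mean-square strength of the response.
\end{itemize}

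Both rest on the same non-resonance facts: $\theta_h(\lambda)\pm\theta_h(\mu)\notin 2\pi\mathbb{Z}$ off the diagonal, and $2\theta_h(\mu)\in(0,2\pi)$ for $\mu>0$. This is exactly where the strict condition $h^2\lambda_{\max}(L_{\gF})<4$ is used, as you correctly identified.
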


The proof is provided in Appendix~\ref{subsec:app_proof_th1}. Thus, both features and properties on long-range propagation and non-vanishing of responses we discussed previously still hold for the discretized dynamics that define \model{}. First, the local nature of message-passing updates enables communication between nodes at distance $r$ only after $\ell \geq r$ iterations. At the same time, each positive-eigenvalue contribution remains oscillatory across depth and time, and does not vanish, making the discrete dynamics consistent with the continuous ones. Indeed, as shown in Appendix~\ref{subsec:app_proof_th1}, for each fixed eigenvalue \(\lambda\), the coefficient multiplying \((P_\lambda)_{vu}\) in \Cref{eq:discrete_wave_spectral_sensitivity} converges to \(\cos(T\sqrt{\lambda})\) as \(\ell \to \infty\) and \(h\to0\) with \(\ell h=T\) fixed. Taking this limit in the spectral expansion therefore recovers the continuous-time expression at time \(T\). Thus, for sufficiently small step sizes, \model{} inherits the persistent oscillatory behaviour of the continuous dynamics, leveraging matrix-valued sheaf operators for long-range propagation.

\section{Experiments}\label{sec:experiments}
We evaluate \model{} across complementary settings that isolate distinct aspects of effective information propagation on graphs. \Cref{sec:echo} examines long-range propagation across structurally challenging topologies with severe information bottlenecks. \Cref{sec:barbell} provides a controlled test of short-range over-squashing, requiring densely connected regions to exchange aggregated information through a narrow communication channel. \Cref{sec:experiments/transfer} evaluates transfer across varying source-to-target distances and graph topologies. We further test \model{} beyond explicitly propagation-oriented tasks on heterophilic benchmarks in Appendix~\ref{sec:heterophilic}. Appendix \ref{sec:experimental-details} details the sheaf learners and hyperparameter search space, Appendix ~\ref{app:barbell_full_results} reports an ablation of the restriction-map parametrization, and Appendix \ref{app:complexity} analyzes computational cost and runtime.

\subsection{Long-range Propagation: ECHO Benchmark}\label{sec:echo}
We first evaluate \model{} on ECHO \citep{gravina_echo_benchmark}, a benchmark specifically designed to assess long-range propagation capabilities. Its synthetic suite, ECHO-Synth, comprises single-source shortest path (SSSP), node eccentricity, and graph diameter prediction, tasks whose solutions depend on information travelling across distant regions of the graph. Models with limited propagation range therefore incur high error. We follow data splits, hyperparameters search space, and experimental protocol of \citep{gravina_echo_benchmark}.

\textbf{Results.}
As shown in \Cref{tab:results_echo}, \model{} achieves the lowest error on both SSSP and eccentricity, substantially outperforming the strongest competing methods, while remaining competitive on diameter prediction. It also consistently surpasses SONAR and GraphCON, the two most closely related oscillatory GNN baselines, as well as established sheaf models, including NSD and CSNN. These results show that \model{}'s oscillatory sheaf dynamics are particularly effective on tasks designed to stress long-range information propagation.
\begin{table}[t]
\caption{\textbf{ECHO-Synth benchmark results
.} Performance comparison on MAE ($\downarrow$). Baseline values are reported directly from \cite{gravina_echo_benchmark}. For each column, the \one{best}, \two{second-best}, and \three{third-best} results are highlighted.} 
\label{tab:results_echo}
\centering

\setlength{\tabcolsep}{4pt}
\vspace{1mm}
\small
\begin{tabular}{l ccc cc}
\toprule
\textbf{Model}  & \textbf{sssp} $\downarrow$ & \textbf{ecc} $\downarrow$   & \textbf{diam} $\downarrow$  \\
\midrule
A-DGN           & $1.176_{\pm 0.140}$        & $4.981_{\pm 0.037}$         & \three{$1.151_{\pm 0.038}$} \\
DRew            & $1.279_{\pm 0.011}$        & $4.651_{\pm 0.020}$         & $1.243_{\pm 0.047}$         \\
GCN             & $2.102_{\pm 0.094}$        & $5.233_{\pm 0.034}$         & $3.832_{\pm 0.262}$         \\
GCNII           & $2.128_{\pm 0.429}$        & $5.241_{\pm 0.030}$         & $2.005_{\pm 0.093}$         \\
GIN/GINE        & $2.234_{\pm 0.271}$        & $4.869_{\pm 0.092}$         & $1.630_{\pm 0.161}$         \\
GPS             & $0.472_{\pm 0.050}$        & $4.758_{\pm 0.021}$         & $2.160_{\pm 0.098}$         \\
GraphCON        & $5.734_{\pm 0.011}$        & $5.474_{\pm 0.001}$         & $2.969_{\pm 0.189}$         \\
GRIT            & \two{$0.121_{\pm 0.013}$}  & $5.091_{\pm 0.158}$         & \one{$1.014_{\pm 0.046}$}   \\
PH-DGN          & $1.323_{\pm 0.485}$        & $5.068_{\pm 0.126}$         & $1.627_{\pm 0.398}$         \\
SWAN            & $0.896_{\pm 0.232}$        & $4.840_{\pm 0.045}$         & \two{$1.121_{\pm 0.070}$}   \\
SONAR           & $0.527_{\pm 0.254}$         & \two{$2.393_{\pm 0.808}$}   & $1.239_{\pm 0.151}$         \\
\midrule
NSD             & \three{$0.231_{\pm0.038}$}  & $4.672_{\pm0.315}$          & $1.448_{\pm0.097}$          \\
CSNN            & $0.421_{\pm0.165}$          & \three{$4.522_{\pm0.048}$}  & \three{$1.134_{\pm0.094}$}  \\
\midrule
\textbf{\model{}}  & \one{$0.085_{\pm0.019}$} & \one{$0.885_{\pm 0.158}$}   & $1.183_{\pm0.020}$          \\
\bottomrule
\end{tabular}
\end{table}

\subsection{Graph Bottleneck Information Transfer: Barbell Benchmark} \label{sec:barbell}
Following \citep{bamberger2025bundle}, we consider the Barbell benchmark to assess robustness to over-squashing. This synthetic node-regression task consists of two cliques connected by a single bridge edge and requires information to be transferred between them. Node features are independently sampled from two disjoint distributions, one per clique, and each node must predict the empirical feature mean of the opposite subgraph. Despite the graph diameter being three, every cross-clique signal must traverse the same bridge edge. The benchmark therefore isolates whether a model can aggregate a clique-level statistic and transport it through a severe communication bottleneck.
We report nodewise mean-squared error, averaged over test graphs. Under the standard benchmark scaling, an MSE near \(1\) indicates no effective cross-clique transfer, values between \(0.4\) and \(0.6\) indicate partial transfer, and values below \(0.25\) are considered to solve the task \citep{bamberger2025bundle, gravina_stable_chebnet}.

\noindent\textbf{Results.}
Table~\ref{tab:barbell-results} summarizes Barbell results, with an extended comparison in Appendix~\ref{app:barbell_full_results}.
Standard NSD variants fail to achieve effective cross-clique transfer at any tested graph sizes, underscoring the need for an explicit propagation mechanism inn SNNs. Even CSNN and SONAR, the closest competitors from the sheaf and oscillatory perspectives, respectively, achieve only partial transfer. By contrast, \model{} solves the task near-perfectly at \( N=10 \) and \( N=20\). The gap is most revealing at \( N=50\), where \model{} is the only method to satisfy the solving criterion, while all baselines exhibit only limited information transfer. 

\begin{table}[t]
\centering
\caption{\textbf{Barbell-clique Results.
}
Test MSE ($\times 10^{-3}$; $\downarrow$) 
Results are reported as mean$_{\pm\mathrm{std}}$ over 5 seeds using the
best configuration selected by validation. For each column, the \one{best}, \two{second-best}, and \three{third-best} results are highlighted.}
\label{tab:barbell-results-full}
\small
\vspace{1mm}
\setlength{\tabcolsep}{6pt}
\begin{tabular}{lccc}
\toprule
\textbf{Model}
& $\boldsymbol{N=10}$ $\downarrow$
& $\boldsymbol{N=20}$ $\downarrow$
& $\boldsymbol{N=50}$ $\downarrow$ \\
\midrule

GCN
& $884.200_{\pm 6.900}$
& $961.100_{\pm 12.300}$
& $860.000_{\pm 35.000}$ \\

GAT
& \two{$7.900_{\pm 2.100}$}
& \two{$487.000_{\pm 342.200}$}
& $852.000_{\pm 29.000}$ \\

GraphSAGE
& $455.500_{\pm 199.600}$
& $998.200_{\pm 6.500}$
& $876.000_{\pm 32.000}$ \\

GIN
& $900.500_{\pm 47.900}$
& $970.600_{\pm 3.900}$
& $985.000_{\pm 123.000}$ \\

GraphTransformer
& $670.300_{\pm 306.800}$
& $1041.200_{\pm 46.900}$
& $831.000_{\pm 13.000}$ \\

ChebNet
& \three{$10.600_{\pm 1.200}$}
& $983.600_{\pm 5.100}$
& $922.000_{\pm 47.000}$ \\

SONAR
& $462.800_{\pm 250.300}$
& \three{$717.500_{\pm 249.600}$}
& \three{$809.000_{\pm 68.000}$} \\

\midrule

NSD
& $937.200_{\pm 11.400}$
& $1005.100_{\pm 7.800}$
& $825.000_{\pm 32.000}$ \\

CSNN
& $473.900_{\pm 106.200}$
& $890.800_{\pm 265.400}$
& \two{$742.000_{\pm 168.000}$} \\

\midrule

\textbf{\model{}}
& \one{$0.700_{\pm 0.300}$}
& \one{$0.900_{\pm 0.500}$}
& \one{$83.000_{\pm 85.000}$} \\

\bottomrule
\end{tabular}
\end{table}

\subsection{Graph Transfer Task} \label{sec:experiments/transfer}
We consider the graph transfer task introduced by \citep{diGiovanniOversquashing}, following the regression setup of \citep{gravina_swan}, with the aim of assessing model capabilities to propagate information at increasing distances with different topologies and bottlenecks. Nodes are initialized with random features, except for the source and target nodes, whose values are set to \(1\) and \(0\), respectively. These nodes are separated by \(\kappa \in\{3,5,10,50\}\) hops, and the objective is to swap their values while leaving all other node features unchanged. The task considers line, ring, and crossed-ring graphs, which induce increasing levels of propagation difficulty. We use the same data, hyperparameter search space, and experimental protocol as in \citep{gravina_swan}.

\textbf{Results.} \Cref{fig:graph_transfer} reports the mean test MSE and standard deviation over three seeds as a function of the distance between source and target nodes. \model{} achieves the lowest error at almost all evaluated distances across the three topologies, while most competing methods progressively degrade as the propagation distance increases. In particular, \model{} consistently improves over non-dissipative GNNs such as A-DGN, SWAN, and PH-DGN, while also outperforming GraphCON and SONAR, which share its oscillatory inductive bias but lack the richer matrix-valued transport provided by the sheaf structure. It also compares favorably with the graph Transformer GPS, despite relying only on local propagation. Overall, these results show that \model{} maintains effective information transfer as the distance between source and target nodes increases.

\begin{figure}[t]
    \centering
    \includegraphics[width=\linewidth]{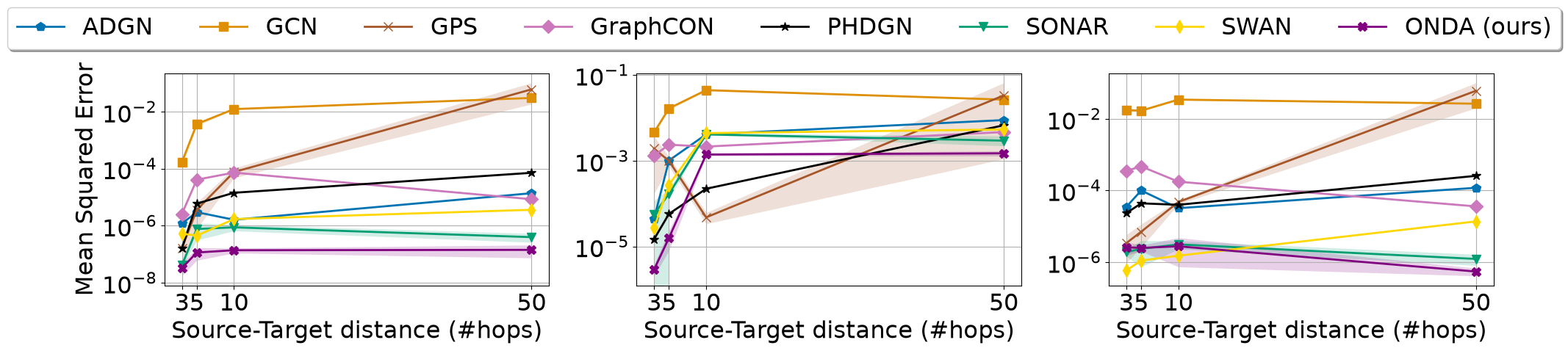}
     {\\\footnotesize \hspace{16mm}(a) Line \hspace{26mm}(b) Ring \hspace{25mm}(c) Crossed-Ring}
    \caption{Information transfer performance on (a) Line, (b) Ring, and (c) Crossed-Ring graphs. }
    \label{fig:graph_transfer}
\end{figure}
\section{Conclusion}\label{sec:conclusion}
We introduced \model{}, a sheaf neural network that couples learned matrix-valued transport with second-order oscillatory dynamics for effective long-range information propagation. Rather than relying on richer sheaf transport alone, \model{} uses the learned sheaf to determine how information is transformed between stalks, while wave-like dynamics allow these transformed signals to remain influential as they propagate across the graph. Our theoretical analysis shows that connected nodes can influence one another regardless of their distance, and our experiments demonstrate the effectiveness of \model{} across a large variety of tasks, including long-range propagation and short-range over-squashing, where our model sets a new bar for state-of-the-art performance, as well as heterophilic benchmarks.

\section{Acknowledgements}\label{sec:acknowledgements}
Jan-Willem Van Looy was supported by MSCA-DN CaLiForNIA, Project ID: 101119552. Rita Fioresi and Ferdinando Zanchetta acknowledge support by GNSAGA-Indam, INFN Gast Initiative,
PNRR MNESYS, PNRR National Center for HPC, Big Data and Quantum Computing CUP J33C22001170001, PNNR SIMQuSEC CUP J13C22000680006. This work was also supported by Horizon Europe EU projects MSCA-SE CaLIGOLA, Project ID: 101086123, COST Action CaLISTA CA21109.

\bibliography{references}
\bibliographystyle{iclr2027_conference}

\appendix
\section{Mathematical Background and Proofs}\label{section: appendix A}

This appendix provides the mathematical background and proofs for the theoretical results presented throughout Section \ref{sec:analysis}. First, Appendix~\ref{subsec_prel_app} gathers some additional notation and conventions on inner products. \Cref{subsec:app_sheaf_lap} establishes the spectral bound for the normalized sheaf Laplacian and includes a description of the directed sheaf operator. Appendix~\ref{subsec:app_sol_wave_eq} derives the closed-form solution of the continuous sheaf wave equation, while Appendices~\ref{subsec:proof_pathwise_source_target_response} and \ref{app:proof_prop2} prove the pathwise and spectral representations of the source-to-target response, respectively. Finally, Appendix~\ref{subsec:app_proof_th1} proves the discrete-time result from \Cref{prop:discrete_source_target_response}, and Corollary \ref{cor:disctocont} shows that the discrete response converges to the continuous response as the step size tends to zero.
Throughout, all vector spaces are over \(\mathbb{R}\).

\subsection{Preliminaries on inner products and norms} \label{subsec_prel_app} 
We recall basic definitions and facts concerning inner products and norms on finite-dimensional spaces. This material can be found in standard introductory textbooks to functional analysis, see for instance \citep{conway2019course}. For each vertex \( u \in \mathcal{V} \), denote by \( \langle \cdot, \cdot \rangle_{\gF(u)}\) a fixed real inner product on \( \gF(u)\) and write \( \| \mX \|_{\gF(u) }^2 = \langle \mX , \mX \rangle_{\gF(u)}\) for \(\mX \in \mathcal{F}(u)\). The cochain space \( \gH = C^0(\gG ; \gF) \) carries the direct sum inner product, defined by \( \langle \mX, \mY \rangle_{ \gH} := \sum_{u\in\gV} \langle\mX_u, \mY_u\rangle_{\gF(u)} \) and \( \|\mX\|_{\gH}^2 =\sum_{u\in\gV}\|\mX_u\|_{\gF(u)}^2.\) All matrix representations are taken with respect to fixed orthonormal bases of the vertex and edge stalks.

For a linear map \( A: F\to G \) between finite-dimensional inner-product spaces, we use the \emph{induced operator norm}, defined as \[ \|A\| := \sup_{ \| \mX \| _ F \leq 1 }\| A \mX \|_G.\] Since we consider finite-dimensional vector spaces, this is a finite quantity and satisfies \(\| A\mX \|_G \leq \|A\| \| \mX\|_F\). Additionally, for \( c \in \mathbb{R}\), \( A, B \in \operatorname{Hom}_\mathbb{R}(F,G)\), \( D \in \operatorname{Hom}_\mathbb{R}(E , F)\), the following properties hold:
\[ \| c A \| = |c| \| A \|,
\qquad \|A+B\|\leq\|A\| + \|B\|,  \qquad
\|AD \|\leq \|A\| \|D\|. \]
Recall that a sequence \( A_n: F \to G\) \emph{converges to \(A\) in operator norm} if \(\| A_n - A \| \to 0\). An operator series \( \sum_{n=0}^{\infty} A_n\) \emph{converges absolutely}  in operator norm if \( \sum_{n=0}^{\infty} \| A_n \| < \infty\). Since \(F\) and \(G\) are finite-dimensional, the space of linear maps
\( F\to G \) is complete in the operator norm. Thus, absolute convergence guarantees convergence of the partial sums in operator norm. 

For \(A,B\in\operatorname{Hom}_{\mathbb R}(F,G)\), we also use the \emph{Hilbert--Schmidt inner product} \[ \langle A,B \rangle_{\mathrm{HS}}  := \operatorname{tr}(A^*B),
 \qquad \|A\|_{\mathrm{HS}}^2 := \langle A,A\rangle_{\mathrm{HS}}. \]
 Since $\operatorname{Hom}_{\mathbb R}(F,G)$ is finite-dimensional, all norms on this space are equivalent. Thus, convergence of operator series is the same in the operator and Hilbert--Schmidt norms, as is absolute convergence.

\subsection{Sheaf Laplacian and Spectral bounds} \label{subsec:app_sheaf_lap}

Recall that we denote by \( \mathcal{D}_{ \gF}\) the block-diagonal part of \( L_{\gF} \), given by \( (\gD_{\gF})_{vv} = \sum_{e \ni v}\gF_{v,e}^{*} \gF_{v,e}. \) The symmetrically normalized sheaf Laplacian is given by \( \Delta_{ \gF} = \gD_{\gF}^{-1/2} L_{ \gF} \gD_{\gF}^{-1/2}. \) Here, \( \gD_{\gF}^{-1/2} \) denotes the Moore--Penrose pseudoinverse of \( \gD_{\gF}^{1/2} \). It acts by \( \mu^{-1/2} \) on each eigenspace of \( \gD_{\gF} \) with eigenvalue \( \mu>0 \), and by zero on \( \ker \gD_{\gF} \). We recall the following well-known fact, presented here with a self-contained elementary proof. A more general version of this result can be found in \citep{hansen2019toward}. 

\begin{lemma}\label{lem:normalized_sheaf_spectrum}
The symmetrically normalized sheaf Laplacian \( \Delta_{\gF} \) is self-adjoint and all of its eigenvalues lie in $[0,2]$.
\end{lemma}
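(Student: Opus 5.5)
Self-adjointness is immediate, so the work is the spectral bound; I will get the lower end from a quadratic-form argument and the upper end from a pointwise edge inequality.

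\textbf{Self-adjointness.} $L_{\gF}$ is self-adjoint and positive semidefinite by the result of \citep{hansen2019toward} recalled in the preliminaries. $\gD_{\gF}$ is block diagonal with blocks $\sum_{e\ni v}\gF_{v,e}^*\gF_{v,e}$, each of which is self-adjoint and positive semidefinite. Hence its pseudoinverse square root $\gD_{\gF}^{-1/2}$ is self-adjoint, and $\Delta_{\gF}=\gD_{\gF}^{-1/2}L_{\gF}\gD_{\gF}^{-1/2}$ is a congruence of $L_{\gF}$, so it is self-adjoint.

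\textbf{Lower bound.} For any $\mX$ we have $\langle \Delta_{\gF}\mX,\mX\rangle=\langle L_{\gF}\mY,\mY\rangle\ge 0$ with $\mY=\gD_{\gF}^{-1/2}\mX$, so every eigenvalue is nonnegative.

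\textbf{Upper bound.} By the Rayleigh quotient characterization it suffices to show $\langle L_{\gF}\mY,\mY\rangle\le 2\langle \gD_{\gF}\mY,\mY\rangle$ for every $\mY$. Applying this with $\mY=\gD_{\gF}^{-1/2}\mX$ gives
\[
\langle\Delta_{\gF}\mX,\mX\rangle\le 2\|\gD_{\gF}^{1/2}\gD_{\gF}^{-1/2}\mX\|^2=2\|\Pi\mX\|^2\le 2\|\mX\|^2,
\]
where $\Pi$ is the orthogonal projector onto $(\ker\gD_{\gF})^\perp$. The identity $\gD_{\gF}^{1/2}\gD_{\gF}^{-1/2}=\Pi$ follows from the functional calculus: both operators act as $\mu^{\pm1/2}$ on the eigenspace of eigenvalue $\mu>0$ and by zero on the kernel. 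This projector identity is the only subtle point and is where I expect the main care to be needed.

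The inequality itself comes from the edge decomposition of the quadratic forms:
\[
\langle L_{\gF}\mY,\mY\rangle=\sum_{e=\{u,v\}}\|\gF_{v,e}\mY_v-\gF_{u,e}\mY_u\|^2,\qquad
\langle\gD_{\gF}\mY,\mY\rangle=\sum_{e=\{u,v\}}\bigl(\|\gF_{v,e}\mY_v\|^2+\|\gF_{u,e}\mY_u\|^2\bigr).
\]
The first is obtained by expanding the nodewise formula in \Cref{eq:sheaf-laplacian-nodewise} and regrouping the sum by edges; the second is direct from the block form of $\gD_{\gF}$. The elementary inequality $\|a-b\|^2\le 2\|a\|^2+2\|b\|^2$, applied to each edge, then yields $\langle L_{\gF}\mY,\mY\rangle\le 2\langle\gD_{\gF}\mY,\mY\rangle$, completing the bound.
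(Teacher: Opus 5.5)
Your proof is correct and follows essentially the same route as the paper. Both arguments take self-adjointness from that of $L_{\gF}$ and $\gD_{\gF}^{-1/2}$, substitute $\mY=\gD_{\gF}^{-1/2}\mX$, use the edgewise bound $\|a-b\|^2\le 2\|a\|^2+2\|b\|^2$ to obtain $\langle L_{\gF}\mY,\mY\rangle\le 2\langle \gD_{\gF}\mY,\mY\rangle$, and conclude via the Rayleigh quotient; your projector identity $\langle \gD_{\gF}\mY,\mY\rangle=\|\Pi\mX\|^2$ is exactly what the paper checks by expanding $\mX$ in an orthonormal eigenbasis of $\gD_{\gF}$, where it reads $\sum_{\mu_j>0}|\alpha_j|^2\le\|\mX\|^2$.
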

\begin{proof}
Since \( L_{\gF} \) and \( \gD_{\gF}^{-1/2}\) are self-adjoint, also \( \Delta_{\gF}^{*} = \gD_{\gF}^{-1/2} L_{\gF}^{*} \gD_{\gF}^{-1/2} = \Delta_{\gF}.\)

Let \( (q_j)_j \) be an orthonormal eigenbasis of \( \gD_{\gF} \), with eigenvalues \( \mu_j \geq 0 \). For \(\mX = \sum_j\alpha_jq_j \), we have \[ \mY = \gD_{ \gF}^{-1/2}\mX = \sum_{\mu_j > 0} \frac{ \alpha_j}{ \sqrt{\mu_j} } q_j,  \qquad \langle \mY, \gD_{\gF}\mY\rangle = \sum_{\mu_j>0}|\alpha_j|^2 \leq \sum_j|\alpha_j|^2 = \|\mX\|^2.\]

Substituting \( \mY = \gD_{\gF}^{-1/2}\mX \), we obtain \[ \begin{aligned} 0 \leq \langle \mX, \Delta_{\gF} \mX \rangle & = \langle \mY, L_{\gF} \mY \rangle = \sum_{e=\{u,v\} \in \gE} \| \gF_{u,e} \mY_u - \gF_{v,e}\mY_v \|^2 \\ & \leq 2 \sum_{e=\{u,v\}\in\gE} \bigl( \|\gF_{u,e}\mY_u\|^2 + \|\gF_{v,e}\mY_v\|^2 \bigr) \\  & = 2 \sum_{v\in\gV}\sum_{e\ni v}
\langle\mY_v, \gF_{v,e}^{*}\gF_{v,e}\mY_v \rangle = 2 \langle \mY, \gD_{\gF}\mY\rangle \leq2\|\mX\|^2, \end{aligned} \]
where the inequality in the second line uses \( \|a - b\|^2 \leq 2\|a\|^2 + 2\|b\|^2 \), and the final inequality uses the bound on
\( \langle \mY, \gD_{\gF}\mY \rangle \) established above. Hence, for any eigenvalue \(\lambda\) with non-zero eigenvector \(\mX\), we find \( 0 \leq \lambda = \frac{ \langle \mX, \Delta_{\gF}\mX\rangle}{\|\mX\|^2} \leq 2. \)
\end{proof}

\paragraph{Directed variant.}\label{paragraph:dir_var} Here we give the precise description of the directed version of the sheaf Laplacian as mentioned in \Cref{par:discr_impl_detail}. For each edge \(e=\{u,v\}\), consider two oriented copies  \(e_u=(u\leftarrow v)\) and \(e_v=(v\leftarrow u) \). Each copy \( e_z \), \(z \in \{ u, v \}\), carries an edge stalk \(\gF(e_z)\) and independently learned restriction maps \( \gF_{w,e_z}: \gF(w) \to \gF(e_z), \ w\in \{u,v\}. \) The nodewise expression is given by \[ (L_{\gF}^{ \mathrm{dir}}\mX)_v = \sum_{e = \{u,v\}\in\gE} \gF_{v,e_v}^{*} \bigl( \gF_{v,e_v}\mX_v-\gF_{u,e_v} \mX_u \bigr). \] This operator is not guaranteed to be self-adjoint. Identifying \(\gF(e_u)=\gF(e_v) = \gF(e)\) and forcing \( \gF_{w,e_u} = \gF_{w,e_v} = \gF_{w,e}\) for \(w \in \{u,v\}\) recovers \(L_{\gF}\). 

\subsection{Solution of the sheaf wave equation} \label{subsec:app_sol_wave_eq}
We derive the solution of \Cref{eq:fixed_operator_wave} by rewriting the second-order dynamics as a first-order linear system and computing its operator exponential. 
\begin{lemma}
Let $L_{\gF}$ be fixed. The unique solution of \[ \ddot{\mX}(t) + L_\gF \mX(t) = 0, \] with initial conditions \(  \mX(0) = \overline{\mX},\  \dot{\mX}(0) = \overline{\mV}, \) 
is given by \( \mX(t) = \gC_{\gF}(t)\overline{\mX}  + \gS_{\gF}(t)\overline{\mV}. \)\end{lemma}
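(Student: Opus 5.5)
Following the appendix's announced strategy, we reduce the second-order equation to a first-order linear system on \(\gH\oplus\gH\). With \(\mZ(t) = (\mX(t), \dot{\mX}(t))\), the equation becomes
\[
\dot{\mZ}(t) = \mA\mZ(t), \qquad
\mA = \begin{pmatrix} 0 & \mI \\ -L_\gF & 0 \end{pmatrix}, \qquad
\mZ(0) = (\overline{\mX}, \overline{\mV}).
\]
Since \(\mA\) is a fixed linear operator on a finite-dimensional space, the standard existence and uniqueness theory for linear ODEs applies. It gives the unique solution \(\mZ(t) = e^{t\mA}\mZ(0)\), where the exponential series converges absolutely in operator norm (Appendix~\ref{subsec_prel_app}). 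Uniqueness of \(\mZ\) gives uniqueness of \(\mX\). Indeed, any \(C^2\) solution \(\mX\) of the second-order problem yields a \(C^1\) solution \((\mX,\dot{\mX})\) of the first-order one, and conversely.

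The key computation is the power series of \(e^{t\mA}\). Directly, \(\mA^2 = \operatorname{diag}(-L_\gF, -L_\gF)\), so by induction \(\mA^{2k} = (-1)^k \operatorname{diag}(L_\gF^k, L_\gF^k)\) and
\[
\mA^{2k+1} = (-1)^k \begin{pmatrix} 0 & L_\gF^k \\ -L_\gF^{k+1} & 0 \end{pmatrix}.
\]
We then split \(\sum_m t^m \mA^m / m!\) into even and odd terms. This regrouping is legitimate by absolute convergence. The top-left block is \(\sum_k (-1)^k t^{2k} L_\gF^k/(2k)! = \gC_\gF(t)\), and the top-right block is \(\sum_k (-1)^k t^{2k+1} L_\gF^k/(2k+1)! = \gS_\gF(t)\). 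Reading off the first component of \(e^{t\mA}\mZ(0)\) gives \(\mX(t) = \gC_\gF(t)\overline{\mX} + \gS_\gF(t)\overline{\mV}\).

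As a sanity check, and as an alternative route that avoids the block exponential, we can verify directly that this formula is a solution. The series for \(\gC_\gF\) and \(\gS_\gF\) are dominated by \(\cosh\)- and \(\sinh\)-type scalar series in \(t\sqrt{\|L_\gF\|}\), so they converge absolutely and locally uniformly, together with all their termwise derivatives. Termwise differentiation then gives \(\dot{\gC}_\gF = -L_\gF \gS_\gF\), \(\dot{\gS}_\gF = \gC_\gF\), \(\gC_\gF(0) = \mI\) and \(\gS_\gF(0) = 0\). Hence \(\ddot{\mX} = -L_\gF \mX\), and the initial conditions hold.

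The only delicate point is justifying termwise differentiation and regrouping of operator series, which follows from absolute convergence in a finite-dimensional normed space. No self-adjointness of \(L_\gF\) is needed, so the lemma also covers the directed variant.
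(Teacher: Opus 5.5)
Your proposal is correct and follows essentially the same route as the paper: the reduction to a first-order phase-space system on $\gH\oplus\gH$, the even/odd splitting of the exponential series to identify the blocks $\gC_{\gF}(t)$ and $\gS_{\gF}(t)$, the termwise derivatives $\gC_{\gF}'=-L_{\gF}\gS_{\gF}$ and $\gS_{\gF}'=\gC_{\gF}$ to verify the equation and the initial conditions, and uniqueness from standard finite-dimensional linear ODE theory. Your remark that self-adjointness is never used is also accurate, since the paper's argument likewise only needs $L_{\gF}$ to be a fixed linear operator.
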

\begin{proof}
For a time-dependent node state \(\mX(t)\in C^0( \gG;\gF)\) with velocity \( \mV(t) =\dot{\mX}(t),\) we follow usual terminology of second-order dynamics \citep{teschl2012ordinary} and introduce the \emph{phase-space state} \[ \mZ(t) := \begin{pmatrix} \mX(t) \\ \dot{\mX}(t) \end{pmatrix} \in\gH\oplus\gH. \]
In this notation, our problem becomes equivalent to the first-order system
\begin{equation}
    \dot{\mZ}(t) = G_{\gF} \mZ(t),
    \qquad G_{\gF} := 
    \begin{pmatrix}
        0 & I_{\gH} \\
        -L_{\gF} & 0
    \end{pmatrix},
    \qquad \mZ(0) = \begin{pmatrix}\overline{\mX}\\\overline{\mV}\end{pmatrix}.
    \label{eq:app_phase_space_system}
\end{equation}
Since \( \gH\oplus\gH\) is finite-dimensional,
\(G_{\gF}\) is bounded. Its exponential is defined by the absolutely convergent power series \( e^{tG_{\gF}} :=  \sum_{ n=0 }^{ \infty } \frac{ t^n}{ n!} G_{\gF}^n \). Since \[ G_{\gF}^2 = -\begin{pmatrix} L_{\gF} & 0\\  0 & L_{\gF} \end{pmatrix},\] the even and odd powers in the above power series give
\begin{equation*}
\begin{aligned}
    e^{tG_{\gF}} & = \sum_{k=0}^{\infty}
      (-1)^k\frac{t^{2k}}{(2k)!} \begin{pmatrix}
          L_{\gF}^k & 0 \\ 0 & L_{\gF}^k
      \end{pmatrix}  +\sum_{k=0}^{\infty}
      (-1)^k\frac{t^{2k+1}}{(2k+1)!}
      \begin{pmatrix}
          0&L_{\gF}^k\\-L_{\gF}^{k+1}&0
      \end{pmatrix}\\
    &=\begin{pmatrix}
        \gC_{\gF}(t)&\gS_{\gF}(t)\\
        -L_{\gF}\gS_{\gF}(t)
        &\gC_{\gF}(t)
      \end{pmatrix}.
\end{aligned}
\end{equation*}

The power series defining \( \gC_{\gF} \)
and \( \gS_{\gF} \) have infinite radius of
convergence and can therefore be differentiated termwise.
Hence \[ \begin{aligned} \gC_{\gF}'(t) 
    & = \sum_{k = 1}^{ \infty }(-1)^k\frac{t^{2k-1}}{(2k-1)!} L_{\gF}^k = -L_{\gF}\sum_{j=0}^{\infty}
      (-1)^j \frac{t^{2j+1}}{(2j+1)!} L_{\gF}^j
     = -L_{\gF}\gS_{\gF}(t),\\
    \gS_{\gF}'(t)
    &=\sum_{k=0}^{\infty} (-1)^k\frac{t^{2k}}{(2k)!}L_{\gF}^k = \gC_{\gF}(t).
\end{aligned} \]
Therefore, for \(
\mX(t) := \gC_{\gF}(t)\overline{\mX}  + \gS_{\gF}(t)\overline{\mV}, \) we find that \[ \ddot{\mX}(t) = - L_{\gF} \gC_{\gF}(t) \overline{\mX} - L_{\gF} \gS_{ \gF} (t)\overline{\mV} = - L_{\gF}\mX(t). \]
Moreover, the above identities at \(t=0\) give that \( \mX(0)=\overline{\mX}, \ \dot{\mX}(0)=\overline{\mV}\). Hence \( \mX(t) \) solves our problem. Uniqueness follows by the standard existence-and-uniqueness theorem for finite-dimensional linear ordinary differential equations.
\end{proof}

\subsection{Proof of Proposition \ref{prop:pathwise_source_target_response} }\label{subsec:proof_pathwise_source_target_response}
We first express the blocks of \( L_{\gF}^k \) as sums of compositions along length-\( k \) walks.

Recall that, for \( k \geq 0 \) and vertices \(u, v \in \mathcal{V}\), we denoted by \( \gW_k(u,v) \) the set of length-\( k \) walks \( \omega=(v_0 = u, v_1, \ldots, v_k = v)\), where either \( v_j = v_{j-1}\) or \( \{v_{j-1}, v_j\} \in \gE\) at each step. To each such \(\omega \in \gW_k(u,v) \), we associated \[ (L_{\gF})_\omega  = (L_{\gF})_{v_k v_{k-1}} \ldots (L_{\gF})_{v_1 v_0}: \gF(u) \rightarrow \gF(v). \] Allowing steps with \(v_j = v_{j-1}\) accounts for the diagonal blocks of \(L_{\gF}\) in these compositions. For \( k = 0 \), we put \(\gW_0 (u , u) = \{ (u) \}\), with empty composition equal to \( I_{\gF(u)} \).

\begin{lemma}\label{lem:walk_expansion}
For every \(k \geq 0 \) and \( u, v \in \gV \), \[(L_{\gF}^k)_{vu} = \sum_{\omega\in\gW_k(u,v)}( L_{\gF})_\omega : \gF(u) \rightarrow \gF(v). \] 
\end{lemma}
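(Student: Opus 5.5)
The plan is to argue by induction on \(k\), using only the block structure of \(L_{\gF}\) and the rule for multiplying block operators on \(\gH=\bigoplus_{w\in\gV}\gF(w)\).

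\textbf{Block multiplication.} The key identity is that for any linear maps \(A,B:\gH\to\gH\) with blocks \(A_{vw}=\pi_v A\iota_w\),
\[
(AB)_{vu}=\sum_{w\in\gV}A_{vw}B_{wu}.
\]
It follows from the resolution of the identity \(\sum_{w}\iota_w\pi_w=I_{\gH}\), which holds by the definition of the direct sum.

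\textbf{Base case.} For \(k=0\), \(L_{\gF}^0=I_{\gH}\), so \((L_{\gF}^0)_{vu}=\pi_v\iota_u\). This equals \(I_{\gF(u)}\) if \(v=u\) and \(0\) otherwise. On the walk side, \(\gW_0(u,v)=\{(u)\}\) when \(v=u\), and its unique element carries the empty composition \(I_{\gF(u)}\). When \(v\neq u\), \(\gW_0(u,v)\) is empty and the sum is \(0\). So the two sides agree.

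\textbf{Sparsity of the blocks.} From the nodewise formula in \Cref{eq:sheaf-laplacian-nodewise}, for \(w\neq v\) we read off
\[
(L_{\gF})_{vw}=-\sum_{e=\{v,w\}}\gF_{v,e}^{*}\gF_{w,e}\quad\text{if }\{v,w\}\in\gE,\qquad (L_{\gF})_{vw}=0\quad\text{otherwise}.
\]
The diagonal block is \((L_{\gF})_{vv}=\sum_{e\ni v}\gF_{v,e}^*\gF_{v,e}\). Consequently, \((L_{\gF})_{vw}\) can be nonzero only when \(w=v\) or \(\{w,v\}\in\gE\), which are exactly the admissible last steps of a walk.

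\textbf{Inductive step.} Assuming the claim for \(k\), write \(L_{\gF}^{k+1}=L_{\gF}L_{\gF}^k\). Block multiplication gives
\[
(L_{\gF}^{k+1})_{vu}=\sum_{w}(L_{\gF})_{vw}\sum_{\omega'\in\gW_k(u,w)}(L_{\gF})_{\omega'}.
\]
Dropping the terms with \(w\) neither equal nor adjacent to \(v\) loses nothing, since \((L_{\gF})_{vw}=0\) there. The remaining pairs \((w,\omega')\) correspond bijectively to walks \(\omega\in\gW_{k+1}(u,v)\), by appending the step \(w\to v\) to \(\omega'\). Under this bijection, \((L_{\gF})_{vw}(L_{\gF})_{\omega'}=(L_{\gF})_\omega\) by the definition of the ordered composition.

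\textbf{Main obstacle.} The only delicate point is bookkeeping. One must confirm that the bijection is exact: every walk of length \(k+1\) decomposes uniquely into its penultimate vertex \(w\) and a prefix walk of length \(k\). One must also confirm that including or excluding the zero blocks does not change either sum. With the conventions fixed above (stays allowed, edges undirected), this is immediate, and finite sums raise no convergence issues.
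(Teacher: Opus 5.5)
Your proposal is correct and follows the same route as the paper. Both argue by induction on $k$ via $(L_{\gF}^{k+1})_{vu}=\sum_{a}(L_{\gF})_{va}(L_{\gF}^k)_{au}$, discard the vanishing blocks with $a\neq v$ and $\{a,v\}\notin\gE$, and match the surviving terms with walks in $\gW_{k+1}(u,v)$ by appending $v$; your explicit base case, the block-multiplication identity via $\sum_w \iota_w\pi_w = I_{\gH}$, and the bijectivity of the appending map spell out details the paper leaves implicit.
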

\vspace{-1.5em}
\begin{proof}
We proceed by induction on \(k\).

For \(k=0\), the result is clear. Suppose the result holds for some \(k \geq 0\). Then \[
\begin{aligned} 
(L_{\gF}^{k+1})_{vu} & = \sum_{a\in\gV} (L_{\gF})_{va}(L_{\gF}^k )_{au}
= \sum_{a\in\gV} \sum_{\omega\in\gW_k(u,a)}  (L_{\gF})_{va}(L_{\gF})_\omega. 
\end{aligned} \] 
    
Since \( (L_{\gF})_{va}=0 \ \text{whenever }  a\neq v \text{ and } \{a,v\}\notin\gE\), only those \(a\) for which \(a=v\) or \(\{a,v\}\in\gE\) can contribute. For every such term, if \( \omega=(v_0=u,\ldots,v_k=a)\in\gW_k(u,a),\) appending \(v\) gives the walk \( \widetilde{\omega} = (v_0=u, \ldots, v_k = a, v_{k+1} = v) \in \gW_{k+1}(u,v), \) with associated composition \[ (L_{\gF})_{\widetilde{\omega}} = (L_{\gF})_{va}(L_{\gF})_\omega.\] Hence, \((L_{\gF}^{k+1})_{vu} =\sum_{\omega\in\gW_{k+1}(u,v)}
(L_{\gF})_\omega,\) completing the induction.
\end{proof}

For two different nodes $v,u$, we define their \emph{distance}, denoted by $\operatorname{dist}_{ \gG}( u, v)$, as the minimum $k \ge 0$ such that $ \gW_k(u,v) \neq \emptyset $ if they lie in the same connected component, and $ \operatorname{dist}_{ \gG}( u, v) = \infty $ otherwise. Also, $\operatorname{dist}_{ \gG}( u, u)=0$.

\emph{Proof of Proposition \ref{prop:pathwise_source_target_response} }. For \( r = \operatorname{dist}_{ \gG }(u,v) < \infty\), we have \(\gW_k(u,v) = \emptyset \) for \(k<r\), and hence \( (L_{\gF}^k)_{vu}=0\) for these values of \(k\). Taking the \((v,u)\)-block of the series defining \( \gC_{\gF}(t)\) gives \[ J^{\gF}_{v\leftarrow u}(t) =\sum_{ k = 0}^{ \infty}( -1 )^k\frac{t^{2k}}{(2k)!}(L_{\gF}^k)_{ v u } = \sum_{k = r}^{\infty} (-1)^k\frac{ t^{2k}}{( 2k)!} \sum_{\omega\in\gW_k(u,v)}(L_{\gF})_\omega. \]

At \( k = r \), these walks are precisely the shortest paths. Indeed, a stationary step or a repeated vertex would allow a segment to be removed and thereby produce a shorter walk from \( u \) to \( v \), contrary to the definition of \( r \).

For the last statement, if \( u \) and \( v \) lie in different connected components, then \( \mathcal{W}_k ( u, v) = \emptyset \) for every \( k \). Every block \((L_{\gF}^k)_{vu}\) therefore vanishes, and \(J^{\gF}_{v\leftarrow u}(t)=0\) for all \(t\). \qed 

\subsection{Proof of Proposition \ref{prop:spectral_stalk_sensitivity}}\label{app:proof_prop2}
This result applies to the positive semidefinite self-adjoint sheaf Laplacian and uses the orthogonal projectors onto the eigenspaces \( P_\lambda \) indexed by distinct eigenvalues given by the finite-dimensional spectral theorem \citep{Halmos1974}. These projectors satisfy \( L_{\gF} P_\lambda = \lambda P_\lambda,\) as well as the identities \( P_\lambda^* = P_\lambda, \ P_\lambda P_\mu = \delta_{\lambda \mu} P_\lambda , \) and \( \sum_{ \lambda \in \operatorname{spec}( L_{\gF}) } P_\lambda=I_{\gH}. \)

\emph{Proof of Proposition \ref{prop:spectral_stalk_sensitivity} }. Substituting the above identities into the power series gives \[ \begin{aligned} \gC_{ \gF } (t) P_\lambda & = \sum_{k=0}^{\infty} (-1)^k\frac{ t^{2k}}{ (2k)! } L_{\gF}^kP_\lambda \\ & = \left( \sum_{k=0}^{\infty} (-1)^k\frac{t^{2k}\lambda^k}{(2k)!} \right) P_\lambda = \cos( t \sqrt{\lambda}) P_\lambda. \end{aligned} \] Since the spectral projectors sum to \(I_{\gH}\), also
\[ \gC_{\gF}(t) = \gC_{\gF}(t) \sum_\lambda P_\lambda = P_0 + \sum_{\lambda>0} \cos(t\sqrt{\lambda})P_\lambda, \]
where \( P_0 \) is the projector onto \( \ker L_{\gF}\), with \(P_0 = 0\) when \( 0 \notin \operatorname{spec}(L_{\gF})\). Applying \(\pi_v\) on the left and \( \iota_u \) on the right proves the result. \qed 

\subsection{Proof of Theorem  \ref{prop:discrete_source_target_response} }\label{subsec:app_proof_th1}
For the proof of this result, we start with two preliminary lemmas. First, Lemma~\ref{lem:discrete_polynomial_response} expresses the discrete response as a polynomial in \( L_{\gF}\). Afterwards, we solve the corresponding scalar recurrence to obtain the spectral formula and boundedness in Lemma \ref{lem:scalar_discrete_wave_mode}. The spectral formula and boundedness in
Theorem~\ref{prop:discrete_source_target_response} then follow directly. To prove that a response which is non-zero at some step cannot converge to zero, we compute the Ces\`aro mean of its squared Hilbert--Schmidt norm. Finally, \Cref{cor:disctocont} shows that the discrete spectral coefficients converge to their continuous counterparts as \(h\to 0\) with \(\ell h=T\) fixed.

Let \((\mX^{(\ell)}, \mV^{(\ell)})\) denote the underlying trajectory of 
\Cref{eq:onda_discrete_updates} with fixed restriction maps, identity weights, and no dissipation or external forcing, \ie
\begin{equation}
 \mV^{(\ell+1)} = \mV^{(\ell)} - hL_{\gF}\mX^{(\ell)}, \qquad \mX^{(\ell + 1)} = \mX^{(\ell)} + h\mV^{(\ell+1)}, 
\label{eq:onda_reduced_discrete_updates}
\end{equation}

and let \( (\widetilde{\mX}^{(\ell)}, \widetilde{\mV}^{(\ell)})\) denote the trajectory obtained by replacing \( \overline{\mX} \) with \( \overline{\mX} + \iota_u \mW \), while keeping the initial velocity fixed. Write \[ \Delta \mX^{(\ell)} := \widetilde{\mX}^{(\ell)} - \mX^{(\ell)}, \qquad \Delta\mV^{(\ell)} := \widetilde{\mV}^{(\ell)} - \mV^{(\ell)}, \] so that \(\Delta \overline{\mX} = \iota_u \mW \) and \(\Delta \overline{\mV} = 0\).
The following lemma expresses the resulting response as a polynomial in \(L_{\gF}\).

\begin{lemma}\label{lem:discrete_polynomial_response}
Let \( p_{0,h}(z)=1\), \( p_{1,h} (z) = 1-h^2z\), and, for \(\ell\geq1\), \[ p_{\ell+1, h}(z) = (2 - h^2 z) p_{ \ell, h}(z) - p_{\ell-1,h}(z) .\] Then, for every integer \( \ell \geq 0 \), \( \Delta \mX^{(\ell)} = p_{\ell,h} (L_{\gF}) \iota_u\mW, \) and \( J^{\gF}_{v\leftarrow u}[\ell,h] = \bigl(p_{\ell,h}(L_{\gF})\bigr)_{vu},\) where \(\deg p_{\ell,h}\leq \ell\). In particular, if 
\( \operatorname{dist}_{\gG} (u,v) > \ell \), then \( J^{\gF}_{v \leftarrow u}[\ell,h]=0\).
\end{lemma}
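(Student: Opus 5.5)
The plan is to exploit linearity of the reduced scheme \Cref{eq:onda_reduced_discrete_updates}. Subtracting the unperturbed trajectory from the perturbed one shows that the differences obey the same recursion,
\[
\Delta\mV^{(\ell+1)} = \Delta\mV^{(\ell)} - hL_{\gF}\Delta\mX^{(\ell)}, \qquad \Delta\mX^{(\ell+1)} = \Delta\mX^{(\ell)} + h\Delta\mV^{(\ell+1)},
\]
with $\Delta\mX^{(0)} = \iota_u\mW$ and $\Delta\mV^{(0)} = 0$. The first task is to eliminate the velocity and obtain a two-step recurrence in $\Delta\mX$ alone. Two consequences of the second equation do this:
\[
h\Delta\mV^{(\ell+1)} = \Delta\mX^{(\ell+1)} - \Delta\mX^{(\ell)} \quad\text{and}\quad h\Delta\mV^{(\ell)} = \Delta\mX^{(\ell)} - \Delta\mX^{(\ell-1)} \quad (\ell\geq 1).
\]
Substituting both into $h$ times the first equation gives, for $\ell\geq1$,
\[
\Delta\mX^{(\ell+1)} = (2I - h^2L_{\gF})\Delta\mX^{(\ell)} - \Delta\mX^{(\ell-1)},
\]
which is the Chebyshev-type recurrence defining $p_{\ell+1,h}$.

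Next I check the two initial terms directly. We have $\Delta\mX^{(0)} = \iota_u\mW = p_{0,h}(L_{\gF})\iota_u\mW$. Using $\Delta\mV^{(0)}=0$, one step gives
\[
\Delta\mV^{(1)} = -hL_{\gF}\iota_u\mW, \qquad \Delta\mX^{(1)} = (I - h^2L_{\gF})\iota_u\mW = p_{1,h}(L_{\gF})\iota_u\mW.
\]
Strong induction on $\ell$ with the recurrence above then gives $\Delta\mX^{(\ell)} = p_{\ell,h}(L_{\gF})\iota_u\mW$ for all $\ell\geq0$. Polynomials in $L_{\gF}$ commute with one another, so the recurrence for the polynomials translates exactly into the recurrence for the vectors. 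The same induction shows $\deg p_{\ell,h}\leq\ell$: multiplying by $(2-h^2z)$ raises the degree by at most one, and subtracting $p_{\ell-1,h}$ does not raise it. Applying $\pi_v$ gives $\Delta\mX_{v,\ell} = \pi_v p_{\ell,h}(L_{\gF})\iota_u\mW$. This holds for every $\mW$, so by the defining property of $J^{\gF}_{v\leftarrow u}[\ell,h]$ we get $J^{\gF}_{v\leftarrow u}[\ell,h] = \bigl(p_{\ell,h}(L_{\gF})\bigr)_{vu}$.

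For the locality claim, I write $p_{\ell,h}(z) = \sum_{k=0}^{\ell} a_k z^k$ and take the $(v,u)$ block termwise:
\[
\bigl(p_{\ell,h}(L_{\gF})\bigr)_{vu} = \sum_{k=0}^{\ell} a_k (L_{\gF}^k)_{vu}.
\]
By Lemma~\ref{lem:walk_expansion}, each $(L_{\gF}^k)_{vu}$ is a sum over $\gW_k(u,v)$. If $\operatorname{dist}_{\gG}(u,v) > \ell \geq k$, each of these walk sets is empty, so every term vanishes. The only point needing care is the velocity elimination at $\ell=0$: there the identity $h\Delta\mV^{(\ell)} = \Delta\mX^{(\ell)} - \Delta\mX^{(\ell-1)}$ is unavailable, which is why $p_{1,h}$ is computed by hand rather than from the recurrence. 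Everything else is routine.
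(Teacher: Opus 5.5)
Your proposal is correct and follows the paper's proof essentially step by step. Like the paper, you difference the linear scheme, compute $\Delta\mX^{(1)} = (I - h^2 L_{\gF})\iota_u\mW$ directly, eliminate the velocity via $h\Delta\mV^{(\ell)} = \Delta\mX^{(\ell)} - \Delta\mX^{(\ell-1)}$ for $\ell \geq 1$ to obtain the two-step recurrence, induct, and then use the walk expansion of Lemma~\ref{lem:walk_expansion} for the locality claim, with the $\ell = 0$ case handled separately for the same reason the paper gives.
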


\begin{proof}
Taking differences in the update \Cref{eq:onda_reduced_discrete_updates} gives \[ \Delta \mV^{(\ell+1)} = \Delta \mV^{(\ell)} - h L_{\gF}\Delta \mX^{(\ell)}, \qquad \Delta \mX^{(\ell+1)} = \Delta\mX^{(\ell)}+h\Delta\mV^{(\ell + 1)}.\]
For \( \ell = 0 \), this becomes \( \Delta \mX^{(1)} = (I_{\gH} -h^2 L_{\gF})\Delta\mX^{(0)} = p_{1,h}(L_{\gF})\iota_u\mW.\) 

For \(\ell \geq 1\), the above update gives \( h \Delta \mV^{(\ell)} = \Delta \mX^{(\ell)} - \Delta\mX^{(\ell-1)} \). We substitute this into the next one, giving
\[  \begin{aligned}
\Delta \mX^{(\ell+1)} & = \Delta \mX^{(\ell)} + h \Delta\mV^{(\ell)} - h^2 L_{\gF} \Delta \mX^{(\ell)} = ( 2 I_{\gH} - h^2 L_{\gF}) \Delta \mX^{(\ell)} - \Delta \mX^{(\ell-1)}.
\end{aligned} \]
Together with the initial values, this recurrence implies by induction that \( \Delta \mX^{(\ell)} = p_{\ell,h}(L_{\gF})\iota_u\mW.\) After applying \( \pi_v \), and using that the identity holds for every \(\mW \in \gF(u)\), this gives \( J^{\gF}_{v \leftarrow u}[\ell , h] = \pi_v p_{\ell , h} (L_{\gF}) \iota_u = \bigl(p_{\ell, h} (L_{\gF})\bigr)_{vu}.\)

The recurrence defining \( p_{\ell,h} \) directly implies that \( \deg p_{\ell,h} \leq \ell \). 

If \( \operatorname{dist}_{\gG}(u,v)>\ell\), then \( \mathcal{W}_j(u,v) = \emptyset \) for every \(j\leq \ell\). By Lemma \ref{lem:walk_expansion}, \( (L_{\gF}^j)_{vu}=0\) for all such \( j \), and hence \( J^{\gF}_{v\leftarrow u}[\ell,h] = \bigl(p_{\ell,h}(L_{\gF})\bigr)_{vu} = 0. \)
\end{proof}

We next solve the scalar recurrence defining \( p_{\ell,h} \). For this and the subsequent proof, we use the following trigonometric identities, valid for \( a, b, \theta \in \mathbb{R} \):
\begin{equation}
    \cos( a + b) + \cos(a - b) =  2\cos(a) \cos(b), \label{eq:cos_sum_difference}
\end{equation}
\begin{equation}
    \sin(a + b) - \sin(a - b) = 2 \cos( a ) \sin( b ),
    \label{eq:sin_sum_difference}
\end{equation}
\begin{equation}
\cos \theta = 1 - 2 \sin^2 \left ( \frac{\theta}{2}\right), \qquad \cos^2 \theta = \frac{1+\cos(2\theta)}{2}.
\label{eq:cos_squared}
\end{equation}

\begin{lemma}\label{lem:scalar_discrete_wave_mode}
Let \( h > 0 \) and \( \lambda \geq 0\) satisfy \( h^2 \lambda < 4 \), and define \( \theta_h(\lambda) := 2 \arcsin \left( \frac{h\sqrt{\lambda}}{2} \right). \) Then, for every integer \(\ell \geq 0 \), \[ p_{\ell,h}(\lambda)   = \frac{ \cos \left( (\ell + \tfrac12) \theta_h( \lambda)\right)}{ \cos \left( \theta_h( \lambda)/ 2 \right) }.\]

Moreover, \( \left|p_{\ell ,h}( \lambda ) \right| \leq \frac{1 }{ \sqrt{1-h^2\lambda/4}} .\)
\end{lemma}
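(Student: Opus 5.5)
We prove the closed form by checking that the right-hand side satisfies the same recurrence and the same initial values as \(p_{\ell,h}(\lambda)\). Write \(\theta=\theta_h(\lambda)\). Since \(0\le h\sqrt{\lambda}/2<1\), we have \(\theta\in[0,\pi)\), so \(\theta/2\in[0,\pi/2)\) and \(\cos(\theta/2)>0\); the formula is therefore well defined. By \Cref{eq:cos_squared}, \(\cos\theta=1-2\sin^2(\theta/2)=1-h^2\lambda/2\), so \(2-h^2\lambda=2\cos\theta\). Set \(q_\ell:=\cos((\ell+\tfrac12)\theta)/\cos(\theta/2)\).

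\textbf{Initial values.} We have \(q_0=1=p_{0,h}(\lambda)\). For \(\ell=1\), apply \Cref{eq:cos_sum_difference} with \(a=\theta\) and \(b=\theta/2\):
\[
\cos(3\theta/2)=2\cos\theta\cos(\theta/2)-\cos(\theta/2).
\]
Hence \(q_1=2\cos\theta-1=1-h^2\lambda=p_{1,h}(\lambda)\).

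\textbf{Recurrence.} Apply \Cref{eq:cos_sum_difference} with \(a=(\ell+\tfrac12)\theta\) and \(b=\theta\):
\[
\cos((\ell+\tfrac32)\theta)+\cos((\ell-\tfrac12)\theta)=2\cos\theta\cos((\ell+\tfrac12)\theta).
\]
Dividing by \(\cos(\theta/2)\) gives \(q_{\ell+1}=(2-h^2\lambda)q_\ell-q_{\ell-1}\) for all \(\ell\ge1\). This is the same second-order recurrence satisfied by \(p_{\ell,h}(\lambda)\). Since the two sequences agree at \(\ell=0\) and \(\ell=1\), induction gives \(p_{\ell,h}(\lambda)=q_\ell\) for every \(\ell\ge0\). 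The case \(\lambda=0\) needs no separate treatment: then \(\theta=0\) and \(q_\ell\equiv1\), which matches \(p_{\ell,h}(0)=1\) from the recurrence.

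\textbf{Bound.} The numerator satisfies \(|\cos((\ell+\tfrac12)\theta)|\le 1\). For the denominator,
\[
\cos(\theta/2)=\sqrt{1-\sin^2(\theta/2)}=\sqrt{1-h^2\lambda/4},
\]
where we take the positive root because \(\theta/2\in[0,\pi/2)\). This gives \(|p_{\ell,h}(\lambda)|\le(1-h^2\lambda/4)^{-1/2}\).

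The only delicate point is the sign and range bookkeeping: the strict hypothesis \(h^2\lambda<4\) is exactly what ensures that \(\arcsin\) is applied to an argument in \([0,1)\) and that \(\cos(\theta/2)\) is strictly positive. Everything else follows from the stated trigonometric identities.
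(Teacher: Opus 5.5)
Your proof is correct and follows essentially the same route as the paper: you use \(2\cos\theta = 2-h^2\lambda\) to show that \(q_\ell=\cos((\ell+\tfrac12)\theta)/\cos(\theta/2)\) satisfies the same initial values and three-term recurrence as \(p_{\ell,h}(\lambda)\), and you bound the denominator via \(\cos(\theta/2)=\sqrt{1-h^2\lambda/4}>0\). The only cosmetic difference is that you get this denominator from \(\sqrt{1-\sin^2(\theta/2)}\), whereas the paper uses the half-angle formula \(\cos^2(\theta/2)=(1+\cos\theta)/2\).
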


\begin{proof} Since \( 0 \leq h \sqrt{ \lambda}/ 2 < 1 \), we have \( \theta_h( \lambda ) \in [ 0, \pi ) \). By definition of \( \theta_h( \lambda ) \), also \( \sin \left( \frac{ \theta_h (\lambda)}{2} \right) = \frac{h\sqrt{\lambda}}{2}.\) Using \Cref{eq:cos_squared}, we find 
\begin{equation}\label{eq:interm_eq_lemma}
2 \cos \theta_h (\lambda) = 2 \left( 1 - 2 \sin^2 \left(\frac{\theta_h( \lambda )}{2} \right)\right) = 2 - h^2 \lambda. \end{equation} 

Define \[ q_\ell := \frac{ \cos(( \ell + \tfrac12) \theta_h( \lambda ))} {\cos ( \theta_h(\lambda)/ 2)}. \] 
For \( \ell = 0 \), \( q_0 = 1 = p_{0,h} (\lambda )\). 

Applying \Cref{eq:cos_sum_difference} to \( a =  \theta_h(\lambda), \ b = \theta_h(\lambda)/ 2\) and dividing by \( \cos( \theta_h( \lambda) / 2) \) gives \[ \frac{ \cos( 3 \theta_h ( \lambda ) / 2 )}{ \cos( \theta_h( \lambda) / 2)} = 2 \cos \theta_h(\lambda) - 1, \] so that \( q_1 = 2 \cos \theta_h(\lambda) - 1 =  1 - h^2 \lambda = p_{1,h}(\lambda) \), where we used \Cref{eq:interm_eq_lemma} and the definition of \(p_{1,h}(\lambda)\).

For \( \ell \geq 1 \), again applying \Cref{eq:cos_sum_difference} with \(a = (\ell+\tfrac12) \theta_h(\lambda) \) and \( b = \theta_h(\lambda)\) gives \[ 
\cos \left( (\ell + \tfrac32 ) \theta_h(\lambda)  \right ) + \cos \left( (\ell - \tfrac12) \theta_h(\lambda) \right) = 2 \cos \theta_h(\lambda) \cos \left( (\ell + \tfrac12)\theta_h(\lambda)\right).
\]

Dividing this expression by \(\cos ( \theta_h( \lambda) / 2)\), and using the definition of \( q_\ell \), we find \[ q_{ \ell + 1 } + q_{\ell - 1} = 2 \cos \theta_h( \lambda )\, q_\ell. \] By \Cref{eq:interm_eq_lemma}, this can be rearranged to \( q_{ \ell + 1} = 2 \cos \theta_h ( \lambda )  q_\ell - q_{\ell - 1} = ( 2 - h^2 \lambda) q_\ell - q_{\ell - 1}.\)

In particular, \( q_\ell \) has the same initial values and recurrence as \( p_{\ell,h} ( \lambda ) \), and hence \[ p_{\ell,h} (\lambda) = \frac{ \cos \left( (\ell + \tfrac12) \theta_h( \lambda) \right) }{ \cos \left( \theta_h( \lambda) /2 \right) }. \]

For the final statement, we use \Cref{eq:cos_squared,eq:interm_eq_lemma} to find \begin{equation}\label{eq_cos_squared_extra}
\cos^2( \theta_h ( \lambda ) /2 ) = \frac{ 1 + \cos \theta_h ( \lambda)}{2} 
= 1 - \frac{ h^2 \lambda}{4}.
\end{equation}
Hence, \( \cos \left( \frac{ \theta_h( \lambda)}{ 2}\right) = \sqrt{ 1 - \frac{ h^2\lambda }{ 4}} > 0 \), and \( \left| p_{ \ell , h}( \lambda )\right| \leq \frac{1}{ \cos \left ( \theta_h (\lambda)/2 \right) } = 
\frac{1}{\sqrt{1 - h^2 \lambda / 4}} \).
\end{proof}

\emph{Proof of Theorem \ref{prop:discrete_source_target_response} }. 
By Lemma \ref{lem:discrete_polynomial_response} and the spectral decomposition of \( L_{\gF} \),
\[ J^{\gF}_{v \leftarrow u}[ \ell, h]  = \bigl( p_{\ell,h}( L_{\gF} ) \bigr )_{vu} = \sum_\lambda p_{\ell,h}(\lambda)(P_\lambda)_{vu}. \] Since \( h^2 \lambda_{\max} (L_{\gF}) < 4\), Lemma \ref{lem:scalar_discrete_wave_mode} gives
\Cref{eq:discrete_wave_spectral_sensitivity}.

The fact that \(  J^{\gF}_{v\leftarrow u}[\ell,h]=0 \ \text{whenever } \operatorname{ dist }_{ \gG}( u, v) > \ell \) follows again from Lemma \ref{lem:discrete_polynomial_response}.

The bound in Lemma \ref{lem:scalar_discrete_wave_mode} gives \[ \left\|J^{\gF}_{v\leftarrow u}[\ell,h]\right\|   \leq \sum_\lambda \frac{\| ( P_\lambda )_{ v u} \|} {\sqrt{1 - h^2 \lambda / 4 }}. \]
The right-hand side is finite under the step-size assumption and independent of \( \ell \), showing that the response is bounded as the propagation depth varies.

For the final statement, fix \( u , v \in \mathcal{V} \) and write \( J_\ell := J^{ \gF }_{ v \leftarrow u }[ \ell , h ] \). To exclude that $J_\ell \longrightarrow 0$, it suffices to show that the Ces\`aro average \( \frac 1 N \sum_{ \ell = 0 }^{ N - 1} \| J_\ell \|_{\mathrm{HS}}^2\) has a strictly positive limit whenever the response is non-zero at some step.

We use the Hilbert–Schmidt norm as introduced in \Cref{subsec_prel_app} to expand the inner products
\[ \begin{aligned}
    \|J_\ell\|_{\mathrm{HS}}^2
    &= \|(P_0)_{vu}\|_{\mathrm{HS}}^2
    +2\sum_{\lambda>0} p_{\ell,h}(\lambda) \left \langle(P_0)_{vu}, (P_\lambda)_{vu}  \right\rangle_{\mathrm{HS}}\\
    & \quad + \sum_{\lambda,\mu>0}  p_{\ell,h}(\lambda)p_{\ell,h}(\mu) \left\langle( P_\lambda )_{vu} , (P_\mu)_{vu}    \right\rangle_{\mathrm{HS}}\\
    &= \|(P_0)_{vu}\|_{\mathrm{HS}}^2
    + 2\sum_{\lambda>0}p_{\ell,h}(\lambda)
      \left\langle(P_0)_{vu},(P_\lambda)_{vu}
      \right\rangle_{\mathrm{HS}} + \sum_{\lambda>0} p_{\ell,h}(\lambda)^2 \|( P_\lambda )_{vu}\|_{ \mathrm{HS}}^2 \\
    & \quad + 2 \sum_{0 < \lambda < \mu} p_{\ell, h }( \lambda) p_{\ell,h}( \mu)
      \left \langle( P_\lambda )_{vu} , ( P_\mu )_{vu}  \right\rangle_{\mathrm{HS}}.
\end{aligned} \]
The last equality separates the diagonal terms \( \lambda = \mu\) and uses symmetry of the real Hilbert--Schmidt inner product.

We analyze the three oscillatory sums separately, using a shared averaging identity. For fixed \(\omega \notin2 \pi\mathbb Z\), applying \Cref{eq:sin_sum_difference} with \( a = (\ell + \tfrac12 ) \omega \) and \( b = \omega/ 2 \) gives \[ \sin( ( \ell + 1 )\omega) - \sin( \ell \omega ) = 2\sin( \omega/2) \cos( (\ell + \tfrac12)\omega). \]
Summing over \(\ell=0, \ldots, N-1\), the left-hand side telescopes.
Consequently,
\begin{equation}\label{eq:discrete_cosine_average}
    T_N( \omega ) := \frac1N \sum_{\ell = 0}^{N - 1} \cos( (\ell + \tfrac12) \omega) = \frac{ \sin(N \omega)}{2N \sin(\omega/2)}     \longrightarrow 0 .
\end{equation}
Here \(\sin(\omega/2)\neq0\) is fixed, while
\(|\sin(N\omega)|\leq1\).

\emph{Term 1}.
For \(\lambda > 0\), we have \( \theta_h( \lambda ) \in ( 0, \pi) \). Lemma \ref{lem:scalar_discrete_wave_mode} and
\Cref{eq:discrete_cosine_average} therefore give \[ \frac1N\sum_{\ell=0}^{N-1}p_{\ell,h} (\lambda) = \frac{T_N(\theta_h(\lambda))}{\cos(\theta_h(\lambda)/2)}  \longrightarrow 0. \]

\emph{Term 2.} For \( \lambda > 0 \), using \Cref{eq:cos_squared}, we obtain
\[\begin{aligned}
    \frac1N \sum_{ \ell = 0 }^{ N-1 } p_{\ell,h}(\lambda)^2
    & = \frac{1} {N \cos^2 ( \theta_h( \lambda ) / 2)} \sum_{\ell=0}^{N-1} \cos^2((\ell + \tfrac12)\theta_h( \lambda)) \\
    &=  \frac{1 + T_N( 2 \theta_h( \lambda))}{ 2 \cos^2( \theta_h(\lambda) /2)}
     \longrightarrow \frac{1}{2\cos^2(\theta_h(\lambda)/2)}.
\end{aligned} \]
Here $T_N(2\theta_h(\lambda))\to0$ because of \Cref{eq:discrete_cosine_average} as \(2\theta_h(\lambda)\in(0,2\pi)\).

\emph{Term 3.} For distinct positive eigenvalues \(\lambda ,\mu\), applying \Cref{eq:cos_sum_difference} gives
\[ \frac1N\sum_{\ell = 0}^{N-1}  p_{\ell,h}(\lambda)p_{\ell,h}(\mu) = \frac{T_N(\theta_h(\lambda)-\theta_h( \mu)) + T_N( \theta_h(\lambda) + \theta_h(\mu))}{2 \cos(\theta_h ( \lambda ) /2 ) \cos(\theta_h(\mu)/2)} \  \longrightarrow0. \]

Indeed, as \(\theta_h\) is strictly increasing,
\[
    \theta_h(\lambda)-\theta_h(\mu)
    \notin 2\pi\mathbb Z,
    \qquad
    \theta_h(\lambda)+\theta_h(\mu)\in(0,2\pi).
\]
So
\Cref{eq:discrete_cosine_average} applies to both cosine terms.

Since the spectrum is finite, combining these limits gives 
\[ \lim_{N\to\infty} \frac1N \sum_{\ell = 0}^{N-1}
    \| J_\ell \|_{\mathrm{HS}}^2 = \|( P_0 )_{vu}\|_{\mathrm{HS}}^2 + \sum_{\lambda>0}  \frac{\| (P_\lambda )_{vu}\|_{\mathrm{HS}}^2} {2\cos^2(\theta_h(\lambda)/2)}. \]
If \(J_\ell\neq0\) at some step, its spectral representation implies that at least one block \((P_\lambda)_{vu}\) is non-zero.
The right-hand side is therefore strictly positive. 

If \(J_\ell \) converged to zero in operator norm, the equivalence of norms in Appendix \ref{subsec_prel_app} would imply \( \| J_\ell \|_{\mathrm{HS}}^2 \to 0 \). Its averages would then also converge to zero, contradicting the positive limit above. Thus, a response that is non-zero at some step cannot converge to zero as \(\ell \to \infty\).
\qed

To conclude, we show that the discrete spectral response converges to its continuous counterpart when the step size tends to zero at fixed propagation time.

\begin{corollary}\label{cor:disctocont}
Let \(L_{\gF}\) be fixed, self-adjoint, and positive semidefinite. Fix \(T>0\) and set \(h=T/\ell\). Then, for every eigenvalue \( \lambda \) of \( L_{\gF} \), \( \lim_{ \ell\to\infty }
p_{\ell,h}( \lambda) = \cos( T \sqrt{\lambda}).\) Consequently, for every pair of nodes \( u, v\),
\[ \lim_{\ell \to \infty} \left\| J^{\gF}_{v \leftarrow u}[\ell, T/\ell] - J^{ \gF }_{ v\leftarrow u} (T) \right \|=0. \]
\end{corollary}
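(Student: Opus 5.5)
The plan is to reduce everything to the scalar closed form of Lemma~\ref{lem:scalar_discrete_wave_mode} and then pass to the limit eigenvalue by eigenvalue. Since the spectrum of \(L_{\gF}\) is finite, I would first fix \(\lambda\ge 0\) and work on the scalar sequence \(p_{\ell,T/\ell}(\lambda)\). For \(\lambda=0\), the recurrence gives \(p_{\ell,h}(0)=1\) for all \(\ell\) by induction: \(p_0=p_1=1\) and \(p_{\ell+1}=2p_\ell-p_{\ell-1}\). This matches \(\cos(0)=1\). For \(\lambda>0\), the step-size condition \(h^2\lambda<4\) holds once \(\ell>T\sqrt{\lambda}/2\), so for all sufficiently large \(\ell\) Lemma~\ref{lem:scalar_discrete_wave_mode} applies. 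It gives
\[
p_{\ell,T/\ell}(\lambda)=\frac{\cos\bigl((\ell+\tfrac12)\theta_{T/\ell}(\lambda)\bigr)}{\cos\bigl(\theta_{T/\ell}(\lambda)/2\bigr)},\qquad \theta_{T/\ell}(\lambda)=2\arcsin\Bigl(\frac{T\sqrt{\lambda}}{2\ell}\Bigr).
\]

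The key step is the asymptotics of the phase. I would use \(\arcsin x = x + O(x^3)\) as \(x\to0\). This gives \(\theta_{T/\ell}(\lambda)=T\sqrt{\lambda}/\ell+O(\ell^{-3})\), and hence
\[
(\ell+\tfrac12)\,\theta_{T/\ell}(\lambda)=T\sqrt{\lambda}+\frac{T\sqrt{\lambda}}{2\ell}+O(\ell^{-2})\longrightarrow T\sqrt{\lambda}.
\]
Meanwhile, \(\theta_{T/\ell}(\lambda)/2\to0\), so the denominator tends to \(\cos 0=1\); equivalently, by the identity in the proof of Lemma~\ref{lem:scalar_discrete_wave_mode}, it equals \(\sqrt{1-T^2\lambda/(4\ell^2)}\to1\). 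Continuity of cosine then yields \(p_{\ell,T/\ell}(\lambda)\to\cos(T\sqrt{\lambda})\). This phase estimate is the only delicate point: one must check that the \(\ell\)-fold amplification of the arcsin error stays \(o(1)\), which the cubic remainder guarantees.

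For the operator statement, I would combine the spectral representation of the discrete response from Theorem~\ref{prop:discrete_source_target_response} (valid for large \(\ell\)) with Proposition~\ref{prop:spectral_stalk_sensitivity}, written uniformly as \(\sum_\lambda \cos(T\sqrt\lambda)(P_\lambda)_{vu}\) since \(\cos 0=1\). Subtracting and applying the triangle inequality for the operator norm gives
\[
\bigl\|J^{\gF}_{v\leftarrow u}[\ell,T/\ell]-J^{\gF}_{v\leftarrow u}(T)\bigr\|\le\sum_{\lambda}\bigl|p_{\ell,T/\ell}(\lambda)-\cos(T\sqrt{\lambda})\bigr|\,\|(P_\lambda)_{vu}\|.
\]
This is a finite sum of terms each tending to zero, which concludes the proof.
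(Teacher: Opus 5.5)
Your proposal is correct and follows essentially the same route as the paper. The paper also reduces to the scalar closed form of Lemma~\ref{lem:scalar_discrete_wave_mode} once \(\ell\) is large, shows \((\ell+\tfrac12)\theta_{T/\ell}(\lambda)\to T\sqrt{\lambda}\) with denominator \(\sqrt{1-h^2\lambda/4}\to 1\), and concludes by the triangle inequality over the finite spectrum; the only cosmetic difference is that it writes \((\ell+\tfrac12)\theta_h(\lambda)=(T+\tfrac h2)\,\theta_h(\lambda)/h\) and uses only \(\arcsin x/x\to 1\), so the cubic remainder you invoke is more than needed, since an \(o(x)\) remainder already suffices.
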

\begin{proof}
Since \(h=T/\ell \to 0\), the stability condition \( h^2 \lambda_{\max} (L_{\gF}) < 4\) holds for all sufficiently large \( \ell \). We may therefore use the spectral coefficient from Lemma \ref{lem:scalar_discrete_wave_mode}.

For \( \lambda = 0 \), the coefficient equals \(1\), so the limit statement holds.

For \(\lambda > 0\), we have  \( \frac{\theta_h ( \lambda )}{h} = \sqrt{ \lambda}
\frac{\arcsin (h \sqrt{\lambda}/2)} {h\sqrt{\lambda}/2} \longrightarrow\sqrt{\lambda},\) where the last step uses the identity  \( \lim_{x \to 0} \frac{\arcsin(x)} {x} = 1\). Since \( \ell h = T \), it follows that
\( (\ell + \tfrac12) \theta_h( \lambda) = (T + \tfrac h 2 ) \frac{ \theta_h( \lambda)}{h} \longrightarrow T \sqrt{\lambda}.
\) Using \Cref{eq_cos_squared_extra}, also \( \cos \bigl ( \theta_h ( \lambda) /2 \bigr) = \sqrt{1 - h^2 \lambda/4 } \longrightarrow 1. \) Continuity of cosine therefore gives \[ p_{\ell,h}(\lambda) = \frac{\cos\bigl( ( \ell + \tfrac12)\theta_h(\lambda)\bigr)}{\cos\bigl(\theta_h(\lambda)/2\bigr)}
\longrightarrow \cos( T \sqrt{ \lambda } ). \] For the final statement, we have that 
\( \left\| J^{ \gF}_{ v \leftarrow u}[\ell, h ] - J^{ \gF}_{v \leftarrow u}( T ) \right \| \leq \sum_{ \lambda } \left| p_{\ell, h}( \lambda ) - \cos( T \sqrt{ \lambda })\right| \left\| ( P_\lambda )_{ vu }\right\|. \) For \( \ell \longrightarrow \infty \), this last term converges to zero by the above.
\end{proof}

\section{Additional Experimental Details}\label{sec:experimental-details} This section provides \Cref{sec:experiments} with details on restriction-map parametrizations, training protocols, and hyperparameter selection.

\subsection{Restriction-Map Parametrizations}
\label{app:restriction_parameterizations}
We consider four restriction-map parametrizations for \model{}, following \citep{bodnar2022neural, borgi2026sheafmpnn}. The maps \(\gF_{v,e}: \gF(v) \to \gF(e)\) act between \(d\)-dimensional node and edge stalks and take the following
forms: 
\begin{itemize}
    \item \emph{Diagonal:} The restriction map is constrained to
    $\mathcal{F}_{v,e}=\operatorname{diag}(\boldsymbol{\alpha}_{ v, e})$. Each stalk coordinate is therefore scaled independently, without mixing information across coordinates. This requires \( d \) scalar coefficients per restriction map.

    \item \emph{Orthogonal.}
    The restriction map satisfies $\mathcal{F}_{ v, e}^{ \top} \mathcal{ F}_{ v,e} = I_d$. These maps allow for coordinate mixing while preserving Euclidean norms and inner products. 
    
    \item \emph{Low-Rank.} The restriction map is represented in factorized form, $ \mathcal{F}_{v,e} = U_{v,e} V_{v,e}^{\top}$, with $ U_{v,e}, V_{v,e}\in\mathbb{R}^{d\times r}$ and $ 1 \leq r < d $. This permits coordinate mixing through an intermediate space of dimension \(r\), with \( \operatorname{rank}(\gF_{v,e})\leq r\).
    
    \item \emph{General.} The restriction map is an unconstrained matrix $ \mathcal{F}_{v,e} \in \mathbb{R}^{d\times d}$. This provides the most flexible local transport, allowing arbitrary linear transformations between stalks, at the cost of $\mathcal{O}(d^2)$ storage and application complexity.
\end{itemize}
We stress that these parametrizations only affect the restriction maps used to construct \(L_{\gF}\), while retaining the same \model{} update rule and oscillatory dynamics.

\subsection{Employed Baselines}\label{app:baselines_details}
In our experiments, the performance of our method is compared with various state-of-the-art GNN and SNN baselines from the literature. Specifically, we consider:
\begin{itemize}
    \item Classical MPNN-based methods, \ie ChebNet~\citep{chebnet}, GCN~\citep{gcn}, GraphSAGE~\citep{SAGE}, GAT~\citep{GAT}, GatedGCN~\citep{gatedgcn}, GIN~\citep{GIN}, 
    GINE~\citep{gine}, GCNII~\citep{gcnii}, 
    and CoGNN~\citep{finkelshtein2024cooperative};
    
    \item Heterophily-specific models, \ie H2GCN \citep{h2gcn}, CPGNN \citep{CPGNN}, FAGCN \citep{FAGCN}, GPR-GNN \citep{GPR_GNN},
FSGNN \citep{FSGNN}, GloGNN \citep{GloGNN}, GBK-GNN \citep{GBK_GNN}, and JacobiConv \citep{jacobiconv};
    
    \item DE-GNNs, \ie 
    GraphCON~\citep{rusch2022graph}, A-DGN~\citep{gravina_adgn}, SWAN~\citep{gravina_swan}, PH-DGN~\citep{gravina_phdgn}, SONAR~\citep{gravina_sonar};
    
    \item Graph Transformers, \ie Transformer~\citep{Vaswani2017, dwivedi2021generalization}, GT~\citep{ijcai2021p0214}, 
    GPS~\citep{graphgps}, GOAT~\citep{goat}, Exphormer~\citep{Shirzad2023}, NAGphormer \cite{chen2023nagphormer}, {GRIT}\citep{grit}
    ;
    
    \item Higher-Order GNNs, \ie 
    DRew~\citep{drew}
    ;
    
    \item Sheaf Neural Networks, \ie NSD~\citep{bodnar2022neural}, BuNN~\citep{bamberger2025bundle}, and CSNN~\citep{ribeiro2026cooperative}.
    

\end{itemize} 

\subsection{Hyperparameter Search Spaces for the Experiments in \Cref{sec:experiments}}\label{app:hyperparameters}
The hyperparameter search spaces used for \model{} are reported in \Cref{tab:echo_hyperparameter_grid} for the ECHO benchmark, \Cref{tab:barbell_hyperparameter_grid} for the Barbell benchmark, and \Cref{tab:transfer_hyperparameter_grid} for the Graph Transfer benchmark. To select the best hyperparameters, we perform a Bayesian optimization in the chosen hyperparameter space.

\begin{table}[h]
\centering
\caption{\textbf{Hyperparameter search space for the ECHO Benchmark.}}
\label{tab:echo_hyperparameter_grid}
\begin{tabular}{ll}
\toprule
Hyperparameter & Candidate values \\
\midrule
Hidden dimension \(H\)
    & $[36, 252]$ \\
Stalk dimension \(d\)
    & \(\{2,3,4\}\) \\
\model{} Blocks \(B\)
    & \(\{1,2,3,4\}\) \\
Inner iterations \(\mathcal{L}\)
    & $[1, 40]$  \\
Step size \(h\)
    & $[10^{-4}, 0.5]$ \\
Learning rate
    & $[10^{-5}, 10^{-2}]$\\
Operator & \(\{\text{classical}, \text{normalized}, \text{directional}\}\) \\
Sheaf type & \(\{\text{general}, \text{diagonal}, \text{orthogonal}\}\) \\
Dissipation
    & \(\{\text{False},\text{True}\}\) \\
Forcing
    & \(\{\text{False},\text{True}\}\) \\
Fixed Restriction maps
    & \(\{\text{False},\text{True}\}\) \\
\bottomrule
\end{tabular}
\end{table}
\begin{table}[h]
\centering
\caption{\textbf{Hyperparameter search space for the Barbell benchmark.}}
\label{tab:barbell_hyperparameter_grid}
\begin{tabular}{ll}
\toprule
Hyperparameter & Candidate values \\
\midrule
Hidden dimension \(H\)
    & \(\{128, 192\}\) \\
Stalk dimension \(d\)
    & \(\{2,6\}\) \\
\model{} Blocks \(B\)
    & \(\{1,2,4\}\) \\
Inner iterations \(\mathcal{L}\)
    & \(\{8, 10, 12, 14, 16, 20\}\) \\
Step size \(h\)
    & \(\{0.1, 0.2, 0.25, 0.33, 0.5, 0.6, 0.66, 0.75, 1.0\}\) \\
Learning rate
    & \(\{3\times10^{-4}\}\) \\
Operator & \(\{\text{classical}\}\) \\
Sheaf type & \(\{\text{general}, \text{diagonal}, \text{orthogonal}, \text{low-rank}\}\) \\
Dissipation
    & \(\{\text{False}\}\) \\
Forcing
    & \(\{\text{False}\}\) \\
Fixed Restriction maps
    & \(\{\text{True}\}\) \\
\bottomrule
\end{tabular}
\end{table}

\begin{table}[h]
\centering
\caption{Hyperparameter search space for the graph transfer task.}
\label{tab:transfer_hyperparameter_grid}
\begin{tabular}{ll}
\toprule
\textbf{Hyperparameter} & \textbf{Candidate values} \\
\midrule
Hidden dimension \(H\)
    & \(64\) \citep{gravina_swan} \\
Stalk dimension \(d\)
    & \(\{1,2,4\}\) \\
\model{} Blocks \(B\)
    & \(\{1,2\}\) \\
Inner iterations \(\mathcal{L}\)
    & distance x \(\{1, 2\}\) / $B$  \\
Step size \(h\)
    & \(\{0.05, 0.1, 0.5, 1.0\}\) \\
Learning rate
    & $10^{-3}$ \citep{gravina_swan} \\
Operator & \(\{\text{classical}, \text{normalized},\text{directional}\}\) \\
Sheaf type & \(\{\text{general}, \text{diagonal}\}\) \\
Dissipation
    & \(\{\text{False},\text{True}\}\) \\
Forcing
    & \(\{\text{False},\text{True}\}\) \\
Fixed Restriction maps
    & \(\{\text{False},\text{True}\}\) \\
\bottomrule
\end{tabular}
\end{table}

\section{Additional Results}\label{app:additional_results}
\subsection{Restriction Map Parametrization Ablation} \label{app:barbell_full_results}
In this section, we investigate how the parametrization of the restriction maps affects information propagation on the Barbell benchmark from \Cref{sec:barbell}. In \Cref{tab:barbell-results-full}, we compare four parametrizations, \ie diagonal, orthogonal, low-rank, and general maps, which differ in the expressivity of the learned transport between stalks. To disentangle the effect of the restriction-map parametrization from that of the oscillatory dynamics, we evaluate the same variants for both \model{} and NSD. This allows us to assess how the choice of sheaf transport affects information transfer across the severe topological bottleneck of the Barbell graphs. To focus on the propagation mechanism, we consider \model{} configurations with a single-layer MLP-based restriction-map learner, without forcing or dissipation.

\paragraph{Results.} 
Across all restriction-map parametrizations and graph sizes, \model{} substantially improves over the corresponding NSD variant, showing that its effectiveness is robust to the particular choice of sheaf transport. All four \model{} parametrizations solve the task at \(N=10\), with the diagonal and orthogonal variants remaining below the solving threshold across all tested sizes. Among them, diagonal maps provide the most consistent behavior, achieving the lowest mean error at both \(N=20\) and \(N=50\). The remaining parametrizations exhibit a stronger dependence on graph size. Notably, the most expressive general maps are not systematically preferable to more constrained parametrizations. Overall, these results indicate that richer restriction-map expressivity alone is not sufficient for effective information transfer across severe bottlenecks, while the oscillatory propagation of \model{} consistently improves upon diffusion-based NSD under matched restriction-map parametrizations.

\begin{table}[h]
\centering
\caption{Barbell additional results. Test MSE ($\times 10^{-3}$; $\downarrow$) on the Barbell task for graph sizes $N\in\{10,20,50\}$. All models report the results over the best configuration selected by validation, as mean$_{\pm\text{std}}$ over 5 seeds. For each column, we report the \one{best}, \two{second-best}, and \three{third-best} results.}
\label{tab:barbell-results}
\small
\vspace{1mm}
\setlength{\tabcolsep}{6pt}
\begin{tabular}{lccc}
\toprule
\textbf{Model}
& $\boldsymbol{N=10}$ $\downarrow$
& $\boldsymbol{N=20}$ $\downarrow$
& $\boldsymbol{N=50}$ $\downarrow$ \\
\midrule

GCN
& $884.200_{\pm 6.900}$
& $961.100_{\pm 12.300}$
& $860.000_{\pm 35.000}$ \\

GAT
& $7.900_{\pm 2.100}$
& $487.000_{\pm 342.200}$
& $852.000_{\pm 29.000}$ \\

GraphSAGE
& $455.500_{\pm 199.600}$
& $998.200_{\pm 6.500}$
& $876.000_{\pm 32.000}$ \\

GIN
& $900.500_{\pm 47.900}$
& $970.600_{\pm 3.900}$
& $985.000_{\pm 123.000}$ \\

GraphTransformer
& $670.300_{\pm 306.800}$
& $1041.200_{\pm 46.900}$
& $831.000_{\pm 13.000}$ \\

ChebNet
& $10.600_{\pm 1.200}$
& $983.600_{\pm 5.100}$
& $922.000_{\pm 47.000}$ \\

SONAR
& $462.800_{\pm 250.300}$
& $717.500_{\pm 249.600}$
& $809.000_{\pm 68.000}$ \\

\cmidrule(r){1-1}\cmidrule(l){2-4}

Diag-NSD
& $937.200_{\pm 11.400}$
& $1006.600_{\pm 22.000}$
& $858.000_{\pm 43.000}$ \\

Ortho-NSD
& $965.300_{\pm 23.000}$
& $1005.100_{\pm 7.800}$
& $891.000_{\pm 38.000}$ \\

LowRank-NSD
& $955.100_{\pm 6.300}$
& $1008.100_{\pm 16.400}$
& $825.000_{\pm 32.000}$ \\

General-NSD
& $938.100_{\pm 12.100}$
& $1034.200_{\pm 23.500}$
& $860.000_{\pm 31.000}$ \\

CSNN
& $473.900_{\pm 106.200}$
& $890.800_{\pm 265.400}$
& $742.000_{\pm 168.000}$ \\

\cmidrule(r){1-1}\cmidrule(l){2-4}

Diag-\model{}
& \two{$1.200_{\pm 0.400}$}
& \one{$0.900_{\pm 0.500}$}
& \one{$83.000_{\pm 85.000}$} \\

Ortho-\model{}
& \one{$0.700_{\pm 0.300}$}
& \three{$68.900_{\pm 165.900}$}
& \two{$92.000_{\pm 127.000}$} \\

LowRank-\model{}
& $22.800_{\pm 19.500}$
& $840.100_{\pm 307.900}$
& \three{$128.000_{\pm 154.000}$} \\

General-\model{}
& \three{$1.300_{\pm 0.300}$}
& \two{$2.200_{\pm 1.200}$}
& $616.000_{\pm 121.000}$ \\

\bottomrule
\end{tabular}
\end{table}

\subsection{Heterophilic Tasks}\label{sec:heterophilic}
Although \model{} is primarily designed to improve long-range information
propagation, we also evaluate its effectiveness on heterophilic tasks. 
We consider the heterophilic benchmarks from \cite{platonov2023a}, and compare \model{} against conventional MPNNs, graph Transformers, heterophily-specific architectures, and recent sheaf neural networks (see also \Cref{app:baselines_details}). 

\paragraph{Hyperparameter Space.}\label{app:hyperparams}
Table \ref{tab:heterophily_hyperparameter_grid} reports the hyperparameter values considered for the heterophilic benchmarks. Hyperparameters were selected through sequential local searches over subsets of these candidate values as well as Bayesian optimization. Final configurations were selected according to validation performance. Similarly to \cite{bamberger2025bundle,ribeiro2026cooperative}, for these tasks we optionally learn restriction maps from node representations produced by a GNN encoder. Specifically, in Table \ref{tab:heterophily_hyperparameter_grid}, \emph{GNN depth}
denotes the number of message-passing layers, and \emph{GNN hidden
dimension} denotes the encoder's hidden width. At depth zero,
an MLP learns the restriction maps directly from the endpoint
node features, similar to \citep{bodnar2022neural}. 

\begin{table}[h]
\centering
\caption{Hyperparameter search space for the heterophilic benchmarks.}
\label{tab:heterophily_hyperparameter_grid}
\begin{tabular}{ll}
\toprule
\textbf{Hyperparameter} & \textbf{Candidate values} \\
\midrule
Hidden dimension \(H\)
    & \(\{64,128,256,512\}\) \\
Stalk dimension \(d\)
    & \(\{2,4,8\}\) \\
\model{} Blocks \(B\)
    & \(\{2,3,4,6,8,10,11\}\) \\
Inner iterations \(\mathcal{L}\)
    & \(\{1,2,4,6,8,10\}\) \\
Step size \(\varepsilon\)
    & \(\{0.01,0.025,0.04,0.05,0.075,0.1\}\) \\
Learning rate
    & \(\{5\times10^{-4},10^{-3},2\times10^{-3},2\times10^{-2}\}\) \\
Operator & \(\{\text{classical}, \text{normalized},\text{directional}\}\) \\
Sheaf type & \(\{\text{general}\}\) \\
Dissipation
    & \(\{\text{False},\text{True}\}\) \\
Forcing
    & \(\{\text{False},\text{True}\}\) \\
Fixed Restriction maps
    & \(\{\text{True}\}\) \\
GNN depth & \(\{0,1,2,3,4,5,6\}\) \\
GNN hidden dim & \(\{64, 128, 256, 512\}\) \\
\bottomrule
\end{tabular}
\end{table}

\paragraph{Results.}
\Cref{tab:results_hetero_complete} reports the performance on the heterophilic benchmarks. \model{} achieves highly competitive performance across the heterophilic benchmarks despite not being specifically designed for these tasks, ranking third overall based on the average rank across the five datasets. Notably, \model{} consistently improves over SONAR across all five datasets, showing that extending oscillatory propagation to the sheaf setting is also beneficial beyond explicitly long-range tasks. Similarly, \model{} substantially outperforms NSD on every dataset, indicating that richer sheaf transport alone is not sufficient to match the performance obtained with the proposed oscillatory dynamics. Compared with more recent sheaf architectures, \model{} remains competitive with BuNN and CSNN. Overall, these results show that \model{} maintains strong performance on heterophilic graphs while improving over both its closest oscillatory and standard sheaf counterparts.

\begin{table}[p]
\setlength{\tabcolsep}{3pt}
\centering
\caption{Mean test set score and std on heterophilic datasets. The higher, the better. 
\one{First}, \two{second}, and \three{third}  best results for each task are color-coded. Baseline results are reported from \cite{finkelshtein2024cooperative, 
platonov2023a, muller2024attending, luan2024heterophilicgraphlearninghandbook, ribeiro2026cooperative}. 
}
\vspace{1mm}
\label{tab:results_hetero_complete}
\small
\begin{tabular}{lccccc}
\hline\toprule
\multirow{2}{*}{\textbf{Model}} & \textbf{Roman-empire} & \textbf{Amazon-ratings} & \textbf{Minesweeper} & \textbf{Tolokers} & \textbf{Questions}\\
& \scriptsize{Acc $\uparrow$} & \scriptsize{Acc $\uparrow$} & \scriptsize{AUC $\uparrow$} & \scriptsize{AUC $\uparrow$} & \scriptsize{AUC $\uparrow$}\\
\midrule
\multicolumn{4}{l}{\textbf{\cite{luan2024heterophilicgraphlearninghandbook}}} \\
$\,$ MLP-2 & 66.04$_{\pm0.71}$ & 49.55$_{\pm0.81}$ & 50.92$_{\pm1.25}$  & 74.58$_{\pm0.75}$ & 69.97$_{\pm1.16}$\\
$\,$ SGC-1 & 44.60$_{\pm0.52}$ & 40.69$_{\pm0.42}$ & 82.04$_{\pm0.77}$  & 73.80$_{\pm1.35}$ & 71.06$_{\pm0.92}$\\
$\,$ MLP-1 & 64.12$_{\pm0.61}$ &  38.60$_{\pm0.41}$ &  50.59$_{\pm0.83}$  & 71.89$_{\pm0.82}$ & 70.33$_{\pm0.96}$\\
\midrule
\multicolumn{4}{l}{\textbf{Graph-agnostic}} \\
$\,$ ResNet      & 65.88$_{\pm0.38}$ & 45.90$_{\pm0.52}$ & 50.89$_{\pm1.39}$ & 72.95$_{\pm1.06}$ & 70.34$_{\pm0.76}$\\
$\,$ ResNet+SGC  & 73.90$_{\pm0.51}$ & 50.66$_{\pm0.48}$ & 70.88$_{\pm0.90}$ & 80.70$_{\pm0.97}$ & 75.81$_{\pm0.96}$\\
$\,$ ResNet+adj  & 52.25$_{\pm0.40}$ & 51.83$_{\pm0.57}$ & 50.42$_{\pm0.83}$ & 78.78$_{\pm1.11}$ & 75.77$_{\pm1.24}$\\
\midrule
\textbf{MPNNs} \\
$\,$ GAT                        & 80.87$_{\pm0.30}$ & 49.09$_{\pm0.63}$ & 92.01$_{\pm0.68}$ & 83.70$_{\pm0.47}$ & 77.43$_{\pm1.20}$\\
$\,$ GAT-sep                    & 88.75$_{\pm0.41}$ & 52.70$_{\pm0.62}$ & 93.91$_{\pm0.35}$ & 83.78$_{\pm0.43}$ & 76.79$_{\pm0.71}$\\
$\,$ GAT (LapPE)                & 84.80$_{\pm0.46}$ & 44.90$_{\pm0.73}$ & 93.50$_{\pm0.54}$ & 84.99$_{\pm0.54}$ & 76.55$_{\pm0.84}$\\
$\,$ GAT (RWSE)                 & 86.62$_{\pm0.53}$ & 48.58$_{\pm0.41}$ & 92.53$_{\pm0.65}$ & 85.02$_{\pm0.67}$ & 77.83$_{\pm1.22}$\\
$\,$ GAT (DEG)                  & 85.51$_{\pm0.56}$ & 51.65$_{\pm0.60}$ & 93.04$_{\pm0.62}$ & 84.22$_{\pm0.81}$ & 77.10$_{\pm1.23}$\\
$\,$ GatedGCN                   & 74.46$_{\pm0.54}$ & 43.00$_{\pm0.32}$ &  87.54$_{\pm1.22}$ &  77.31$_{\pm1.14}$ & 76.61$_{\pm{1.13}}$\\
$\,$ GCN                        & 73.69$_{\pm0.74}$ & 48.70$_{\pm0.63}$ & 89.75$_{\pm0.52}$ & 83.64$_{\pm0.67}$ & 76.09$_{\pm1.27}$\\
$\,$ GCN (LapPE)                & 83.37$_{\pm0.55}$ & 44.35$_{\pm0.36}$ & 94.26$_{\pm0.49}$ & 84.95$_{\pm0.78}$ & 77.79$_{\pm1.34}$\\
$\,$ GCN (RWSE)                 & 84.84$_{\pm0.55}$ & 46.40$_{\pm0.55}$ & 93.84$_{\pm0.48}$ & \three{85.11$_{\pm0.77}$} & 77.81$_{\pm1.40}$\\
$\,$ GCN (DEG)                  & 84.21$_{\pm0.47}$ & 50.01$_{\pm0.69}$ & 94.14$_{\pm0.50}$ & 82.51$_{\pm0.83}$ & 76.96$_{\pm1.21}$\\
$\,$ CoGNN($\Sigma$, $\Sigma$)  & 91.57$_{\pm0.32}$ & 51.28$_{\pm0.56}$ &  95.09$_{\pm1.18}$ &  83.36$_{\pm0.89}$ & \one{80.02$_{\pm0.86}$} \\
$\,$ CoGNN($\mu$, $\mu$)        & 91.37$_{\pm0.35}$ &  \one{54.17$_{\pm0.37}$} &  97.31$_{\pm0.41}$ &  84.45$_{\pm1.17}$ & 76.54$_{\pm0.95}$ \\
$\,$ GraphSAGE                  & 85.74$_{\pm0.67}$ & \three{53.63$_{\pm0.39}$} & 93.51$_{\pm0.57}$ & 82.43$_{\pm0.44}$ & 76.44$_{\pm0.62}$\\
$\,$ SONAR                      & 89.82$_{\pm 0.57}$ & 52.22$_{\pm 0.14}$ & 96.29$_{\pm 0.73}$ & 83.57$_{\pm 1.44}$ & 74.96$_{\pm 1.10}$\\
\midrule
\textbf{Graph Transformers} \\
$\,$ Exphormer   & 89.03$_{\pm0.37}$  &  53.51$_{\pm0.46}$  &  90.74$_{\pm0.53}$  &  83.77$_{\pm0.78}$ & 73.94$_{\pm1.06}$\\
$\,$ NAGphormer  & 74.34$_{\pm0.77}$  &  51.26$_{\pm0.72}$  &  84.19$_{\pm0.66}$  &  78.32$_{\pm0.95}$ & 68.17$_{\pm1.53}$\\
$\,$ GOAT        & 71.59$_{\pm1.25}$  &  44.61$_{\pm0.50}$  &  81.09$_{\pm1.02}$  &  83.11$_{\pm1.04}$ & 75.76$_{\pm1.66}$\\
$\,$ GPS         & 82.00$_{\pm0.61}$  &  53.10$_{\pm0.42}$  &  90.63$_{\pm0.67}$  &  83.71$_{\pm0.48}$ & 71.73$_{\pm1.47}$\\
$\,$ GPS\textsubscript{GCN+Performer} (LapPE) & 83.96$_{\pm0.53}$ & 48.20$_{\pm0.67}$ & 93.85$_{\pm0.41}$ & 84.72$_{\pm0.77}$ & 77.85$_{\pm1.25}$\\
$\,$ GPS\textsubscript{GCN+Performer} (RWSE) & 84.72$_{\pm0.65}$ & 48.08$_{\pm0.85}$ & 92.88$_{\pm0.50}$ & 84.81$_{\pm0.86}$ & 76.45$_{\pm1.51}$\\
$\,$ GPS\textsubscript{GCN+Performer} (DEG) & 83.38$_{\pm0.68}$ & 48.93$_{\pm0.47}$ & 93.60$_{\pm0.47}$ & 80.49$_{\pm0.97}$ & 74.24$_{\pm1.18}$\\
$\,$ GPS\textsubscript{GAT+Performer} (LapPE) & 85.93$_{\pm0.52}$ & 48.86$_{\pm0.38}$ & 92.62$_{\pm0.79}$ & 84.62$_{\pm0.54}$ & 76.71$_{\pm0.98}$\\
$\,$ GPS\textsubscript{GAT+Performer} (RWSE) & 87.04$_{\pm0.58}$ & 49.92$_{\pm0.68}$ & 91.08$_{\pm0.58}$ & 84.38$_{\pm0.91}$ & 77.14$_{\pm1.49}$\\
$\,$ GPS\textsubscript{GAT+Performer} (DEG) & 85.54$_{\pm0.58}$ & 51.03$_{\pm0.60}$ & 91.52$_{\pm0.46}$ & 82.45$_{\pm0.89}$ & 76.51$_{\pm1.19}$\\
$\,$ GT          & 86.51$_{\pm0.73}$ & 51.17$_{\pm0.66}$ & 91.85$_{\pm0.76}$ & 83.23$_{\pm0.64}$ & 77.95$_{\pm0.68}$\\
$\,$ GT-sep      & 87.32$_{\pm0.39}$ & 52.18$_{\pm0.80}$ & 92.29$_{\pm0.47}$ & 82.52$_{\pm0.92}$ & 78.05$_{\pm0.93}$\\
\midrule
\multicolumn{4}{l}{\textbf{Heterophily-Designated GNNs}} \\
$\,$ CPGNN       & 63.96$_{\pm0.62}$ & 39.79$_{\pm0.77}$ & 52.03$_{\pm5.46}$ & 73.36$_{\pm1.01}$ & 65.96$_{\pm1.95}$\\
$\,$ FAGCN       & 65.22$_{\pm0.56}$ & 44.12$_{\pm0.30}$ & 88.17$_{\pm0.73}$ & 77.75$_{\pm1.05}$ & 77.24$_{\pm1.26}$\\
$\,$ FSGNN       & 79.92$_{\pm0.56}$ & 52.74$_{\pm0.83}$ & 90.08$_{\pm0.70}$ & 82.76$_{\pm0.61}$ & \three{78.86$_{\pm0.92}$}\\
$\,$ GBK-GNN     & 74.57$_{\pm0.47}$ & 45.98$_{\pm0.71}$ & 90.85$_{\pm0.58}$ & 81.01$_{\pm0.67}$ & 74.47$_{\pm0.86}$\\
$\,$ GloGNN      & 59.63$_{\pm0.69}$ & 36.89$_{\pm0.14}$ & 51.08$_{\pm1.23}$ & 73.39$_{\pm1.17}$ & 65.74$_{\pm1.19}$\\
$\,$ GPR-GNN     & 64.85$_{\pm0.27}$ & 44.88$_{\pm0.34}$ & 86.24$_{\pm0.61}$ & 72.94$_{\pm0.97}$ & 55.48$_{\pm0.91}$\\
$\,$ H2GCN       & 60.11$_{\pm0.52}$ & 36.47$_{\pm0.23}$ & 89.71$_{\pm0.31}$ & 73.35$_{\pm1.01}$ & 63.59$_{\pm1.46}$\\
$\,$ JacobiConv  & 71.14$_{\pm0.42}$ & 43.55$_{\pm0.48}$ & 89.66$_{\pm0.40}$ & 68.66$_{\pm0.65}$ & 73.88$_{\pm1.16}$\\
\midrule
\multicolumn{6}{l}{\textbf{Sheaf Neural Networks}} \\
$\,$ NSD         & $80.41_{\pm0.72}$ & $42.76_{\pm0.54}$ & $92.15_{\pm0.84}$ & $78.83_{\pm0.76}$ & $69.69_{\pm1.46}$ \\
$\,$ BuNN        & \three{$91.75_{\pm0.39}$} & \two{$53.74_{\pm0.51}$} & \two{$98.99_{\pm0.16}$} & $84.78_{\pm0.80}$ & $78.75_{\pm1.09}$ \\
$\,$ CSNN        & \one{$92.63_{\pm0.50}$} & $52.07_{\pm1.00}$ & \one{$99.07_{\pm0.25}$} & \one{$85.45_{\pm0.53}$} & \two{$79.31_{\pm1.22}$} \\
\midrule
\multicolumn{6}{l}{\textbf{Ours}} \\
$\,$ \model{} & \two{$91.86_{\pm0.31}$}  & $ 52.76_{\pm0.48}$  & \three{$98.95_{\pm0.54}$}  &  \two{$85.21 _{\pm 0.9}$}  & $ 76.14 _{\pm 1.30} $  \\

\bottomrule\hline      
\end{tabular}
\end{table}

\section{Time and Memory Complexity Analysis}
\label{app:complexity}
In this section, we provide both a theoretical and experimental analysis of the time and space complexity of \model{}. Let \(n=|\gV|\) and \(m=|\gE|\), let \(d\) denote the stalk dimension, \(c\) the number of feature channels, \(\mathcal{L}\) the number of internal propagation steps per block, and \(B\) the number of blocks. Since \model{} operates through graph-local propagation, its computational cost is determined by the stalk dimension, the number of feature channels, and the parameterization of the restriction maps.

\noindent\textbf{Time Complexity.}
Consider a single propagation step. The state and
velocity \(\mV\) each contain \(ndc\) scalar entries. The stalk transformation \((\mI_n\otimes\mW_1)\mX\) costs \(\gO(nd^2c)\), while the channel mixer \(\mW_2\) costs \(\gO(ndc^2)\). The cost of applying the sheaf operator depends on the restriction-map family (see Appendix~\ref{app:restriction_parameterizations}). Writing \(q(d)\) for the cost of applying one restriction map to one feature channel, we have:
\begin{equation}
    q(d)
    = \begin{cases}
        d, & \text{diagonal},\\
        d^2, & \text{general or orthogonal},\\
        rd, & \text{rank-}r\text{ low-rank},
    \end{cases}
    \label{eq:restriction-cost}
\end{equation}
where the low-rank complexity assumes that the restriction maps are applied in factorized form. A sparse application of the sheaf operator therefore costs \(\gO(mq(d)c)\). Since the restriction maps may be recomputed from the current state at each propagation step, we denote the corresponding learner cost by \(\gC_{\mathrm{map}}\). The velocity update uses the sheaf propagation term already
counted above. The remaining updates of \(\mV\) and \(\mX\) involve only elementwise scaling and addition, contributing \(\gO(ndc)\) operations per step. This contribution is absorbed by the node-wise transformation costs, so the combined cost of one \model{} propagation step is:
\begin{equation}
    T_{\mathrm{\model{}}}
    = 
    \gO\!\left(
        \gC_{\mathrm{map}}
        + mq(d)c
        + nd^2c
        + ndc^2
    \right).
    \label{eq:sheafsonar-complexity}
\end{equation}
Across \(B\) blocks, each with \(\gL\) propagation steps, the total propagation cost is:
\begin{equation*}
    T_{\mathrm{\model{}}}^{\mathrm{net}}
    = \begin{cases}
        \gO\!\left(
            B\gC_{\mathrm{map}} + B\gL( mq(d)c + nd^2c + ndc^2)
        \right),
        & \text{when maps fixed within each block},\\[2mm]
        \gO\!\left(
            B\gL\left(\gC_{\mathrm{map}} + mq(d)c + nd^2c + ndc^2 \right)
        \right),
        & \text{when maps recomputed at each step}.
    \end{cases}
\end{equation*}

In particular, for fixed stalk dimension \(d\), feature dimension \(c\), and restriction-map parameterization, \model{} retains linear scaling with the graph size \(n+m\). Thus, its matrix-valued sheaf transport preserves the locality and sparse graph scaling of message-passing architectures \citep{MPNN}. In contrast, Graph Transformers based on dense global self-attention typically incur quadratic complexity in the number of nodes, \(\gO(n^2)\), due to all-pairs interactions. Thus, despite its richer local transport, \model{} preserves the sparse scaling characteristic of MPNNs and avoids the quadratic dependence on \(n\) of dense Graph Transformers.

\noindent\textbf{Memory Complexity.}
The instantaneous state \(\mX\) and velocity \(\mV\) together contain \(2ndc\) scalar entries and therefore require \(\gO(ndc)\) memory. The learned restriction maps additionally require \(\gO(ms(d))\) storage, where:
\begin{equation}
    s(d)
    =
    \begin{cases}
        d, & \text{diagonal},\\
        d^2, & \text{general or orthogonal},\\
        rd, & \text{rank-}r\text{ low-rank},
    \end{cases}
\end{equation}
up to constant factors associated with the chosen parameterization. Thus, excluding trainable parameters and common graph storage, inference requires:
\begin{equation}
    M_{\mathrm{\model{}}}^{\mathrm{infer}}
    = 
    \gO\!\left(ndc+ms(d)\right).
    \label{eq:inference-memory}
\end{equation}
During training, backpropagation through the unrolled second-order dynamics requires retaining intermediate activations. Without activation checkpointing, this gives the leading-order bound:
\begin{equation}
    M_{\mathrm{\model{}}}^{\mathrm{train}}
    =
    \gO\!\left(
        B\gL\left(ndc+ms(d)\right)
    \right).
    \label{eq:training-memory}
\end{equation}

For fixed \(d\) and restriction-map parameterization, the memory requirements of \model{} also scale linearly with \(n+m\), as in MPNNs. This contrasts with dense Graph Transformers, whose attention matrices require \(\gO(n^2)\) memory per layer. Hence, both the computational and memory complexity of \model{} remain compatible with sparse graph processing, with the additional cost arising from the stalk-valued representation and matrix-valued restriction maps rather than from dense pairwise interactions.

\noindent\textbf{Matched-parameter Empirical Cost.}
To complement the asymptotic analysis, we measure wall-clock time and peak GPU memory on the Barbell benchmark under a controlled matched-parameter setting. We compare \model{} with SONAR \citep{gravina_sonar}, a message-passing architecture based on oscillatory dynamics, to assess the practical computational overhead introduced by the sheaf-based components of \model{}. All models use \(B=4\) blocks, \(\gL=12\) internal propagation steps per block, and approximately \(100\)k trainable parameters. Measurements are obtained with batch size one (to avoid confounding effects from batch-level computational optimizations) on an RTX~5080, with one exclusive GPU process at a time. 
\Cref{tab:sonar-complexity} reports results on multiple  settings of the problem, \(N=10, 20, 50\), showing results for different implementations of \model{}'s restriction-map parametrization (see Appendix~\ref{app:restriction_parameterizations}). \begin{table*}[t]
\centering
\caption{\textbf{Matched-parameter computational cost on Barbell.}
All models use approximately \(100\)k trainable parameters, four blocks, and \(12\) internal propagation steps per block.
We report peak training memory, training time per epoch, and inference latency per graph for
\(N\in\{10,20,50\}\), corresponding to graphs with \(20\), \(40\), and \(100\) nodes, respectively.
Values are means over five seeds; lower is better.}
\label{tab:sonar-complexity}
\vspace{1mm}
\setlength{\tabcolsep}{3pt}
\resizebox{\textwidth}{!}{%
\begin{tabular}{l r ccc ccc ccc}
\toprule
&
& \multicolumn{3}{c}{\(\boldsymbol{N=10}\)}
& \multicolumn{3}{c}{\(\boldsymbol{N=20}\)}
& \multicolumn{3}{c}{\(\boldsymbol{N=50}\)}
\\
\cmidrule(lr){3-5}
\cmidrule(lr){6-8}
\cmidrule(lr){9-11}

\textbf{Model}
& \textbf{\#Params}
& \textbf{Mem.}
& \textbf{Train}
& \textbf{Infer.}
& \textbf{Mem.}
& \textbf{Train}
& \textbf{Infer.}
& \textbf{Mem.}
& \textbf{Train}
& \textbf{Infer.}
\\

&
& \textbf{MB}
& \textbf{ms/ep.}
& \textbf{ms/g}
& \textbf{MB}
& \textbf{ms/ep.}
& \textbf{ms/g}
& \textbf{MB}
& \textbf{ms/ep.}
& \textbf{ms/g}
\\
\midrule

SONAR
& 100,079
& 23.1
& 2400.1
& 10.245
& 35.1
& 2452.0
& 10.800
& 120.7
& 2675.2
& 11.138
\\

\midrule

Diag-\model{}
& 101,405
& 26.8
& 2186.3
& 6.716
& 50.9
& 2193.7
& 6.795
& 223.0
& 2270.4
& 6.722
\\

Ortho-\model{}
& 103,700
& 24.5
& 2229.1
& 7.849
& 41.1
& 2237.4
& 7.962
& 159.9
& 2432.3
& 7.925
\\

LowRank-\model{}
& 101,540
& 27.6
& 2464.3
& 8.371
& 53.9
& 2459.3
& 8.290
& 241.8
& 2736.4
& 8.610
\\

General-\model{}
& 104,814
& 27.4
& 2404.6
& 8.150
& 53.1
& 2447.9
& 8.138
& 236.5
& 2729.5
& 8.604
\\

\bottomrule
\end{tabular}%
}
\end{table*}
Despite the richer per-edge computation predicted by \Cref{eq:sheafsonar-complexity}, the measured runtime remains comparable to SONAR under a matched parameter budget. 
At \(N=50\), Diag-\model{} and Ortho-\model{} reduce training time by \(15.1\%\) and \(9.1\%\), respectively, relative to SONAR, while their inference latency is lower by \(39.6\%\) and \(28.8\%\). The more expressive LowRank and General variants incur only a \(2.3\%\) and \(2.0\%\) increase in training time, respectively, while still reducing inference latency by approximately \(22.7\%\). Across \(N\in\{10,20,50\}\), all four \model{} variants exhibit lower measured inference latency than SONAR. At \(N=50\), peak training memory ranges from \(159.9\) to \(241.8\) MB for the sheaf variants, compared with \(120.7\) MB for SONAR. This increase is consistent with \Cref{eq:training-memory}: \model{} maintains \(d\)-dimensional stalk states and learned edge-wise restriction maps throughout the unrolled dynamics. Importantly, this additional memory buys a strictly richer propagation mechanism while preserving sparse, linear-in-\(n+m\) graph scaling. Hence, for the small stalk dimensions used in practice, \model{} retains the computational profile of local SONAR propagation while replacing scalar communication with expressive matrix-valued transport.

\end{document}